\newtheorem{theorem}{Theorem}
\begin{document}
%
\title{Active Self-Paced Learning for Cost-Effective and Progressive Face Identification}
%
%
%

\author{Liang Lin, Keze Wang, Deyu Meng, Wangmeng Zuo, and Lei Zhang
\IEEEcompsocitemizethanks
 {

This work was supported in part by the
State Key Development Program under Grant 2016YFB1001004, in part by
the National Natural Science Foundation of China under Grant 61671182, 61661166011 and 61373114, in part by the National Grand Fundamental Research 973 Program of China under Grant No. 2013CB329404, in part by Hong Kong Scholars Program and Hong Kong Polytechnic University Mainland University Joint Supervision Scheme, and sponsored
by CCF-Tencent Open Research Fund (NO. AGR20160115).

 \IEEEcompsocthanksitem
 L. Lin and K. Wang are with School of Data and Computer Science, Sun Yat-sen University, Guangzhou, China and also with Engineering Research Center for Advanced Computing Engineering Software of Ministry of Education, China. Email: linliang@ieee.org; kezewang@gmail.com.

\IEEEcompsocthanksitem D. Meng is with School of Mathematics and Statistics and Ministry of  Education Key Lab of  Intelligent Networks and Network Security, Xi'an Jiaotong University, P. R. China. Email: dymeng@mail.xjtu.edu.cn.

\IEEEcompsocthanksitem W. Zuo is with School of Computer Science and Technology, Harbin Institute of Technology, Harbin, P. R. China. Email: cswmzuo@gmail.com.

\IEEEcompsocthanksitem L. Zhang is with Dept. of Computing, The Hong Kong Polytechnic University, Hong Kong. Email: cslzhang@comp.polyu.edu.hk.}}

\markboth{}
{}


\IEEEcompsoctitleabstractindextext{
\begin{abstract}

This paper aims to develop a novel cost-effective framework for face identification, which progressively maintains a batch of classifiers with the increasing face images of different individuals. By naturally combining two recently rising techniques: active learning (AL) and self-paced learning (SPL), our framework is capable of automatically annotating new instances and incorporating them into training under weak expert recertification. We first initialize the classifier using a few annotated samples for each individual, and extract image features using the convolutional neural nets. Then, a number of candidates are selected from the unannotated samples for classifier updating, in which we apply the current classifiers ranking the samples by the prediction confidence. In particular, our approach utilizes the high-confidence and low-confidence samples in the self-paced and the active user-query way, respectively. The neural nets are later fine-tuned based on the updated classifiers. Such heuristic implementation is formulated as solving a concise active SPL optimization problem, which also advances the SPL development by supplementing a rational dynamic curriculum constraint. The new model finely accords with the ``instructor-student-collaborative'' learning mode in human education. The advantages of this proposed framework are two-folds: i) The required number of annotated samples is significantly decreased while the comparable performance is guaranteed. A dramatic reduction of user effort is also achieved over other state-of-the-art active learning techniques. ii) The mixture of SPL and AL effectively improves not only the classifier accuracy compared to existing AL/SPL methods but also the robustness against noisy data. We evaluate our framework on two challenging datasets, which include hundreds of persons under diverse conditions, and demonstrate very promising results. Please find the code of this project at: http://hcp.sysu.edu.cn/projects/aspl/

\end{abstract}



\begin{IEEEkeywords}
Cost-effective model; Active learning; Self-paced learning; Incremental processing; Face identification
\end{IEEEkeywords}}

\maketitle

\IEEEdisplaynotcompsoctitleabstractindextext

\IEEEpeerreviewmaketitle

\section{Introduction}\label{sec:introduction}

With the growth of mobile phones, cameras and social networks, a large amount of photographs is rapidly created, especially those containing person faces. { To interact with these photos, there have been increasing demands of developing intelligent systems (e.g., content-based personal photo search and sharing from either his/her mobile albums or social network) with face recognition techniques~\cite{celli2014automatic,
stone2010toward, sid15tcsvt}. Thanks to several recently proposed pose/expression normalization and alignment-free approaches~\cite{pfr13pami, tpr13cvpr, hpen15cvpr}, identifying face in the wild has achieved remarkable progress. As for the commercial product,} the website ``Face.com'' { once provided} an API (application interface) to automatically detect and recognize faces in photos. The main problem in such scenarios is to identify individuals from images under a relatively unconstrained environment. Traditional methods usually handle this problem by supervised learning \cite{face_hybrid}, while it is typically expensive and time-consuming to prepare a good set of labeled samples. Since only a few data are labeled, Semi-supervised learning~\cite{AUSDL15ICCV} may be a good candidate to solve this problem. But it has been pointed out by~\cite{hurtdata_pami15}: Due to large amounts of noisy samples and outliers, directly using the unlabeled data may significantly reduce learning performance.

This paper targets on the challenge of incrementally learning a batch of face recognizers with the increasing face images of different individuals\footnote{http://hcp.sysu.edu.cn/projects/aspl/}. Here we assume that the person faces can be basically detected and localized by existing face detectors. However, to build such a system is quite challenging in the following aspects.

\begin{itemize}
\item Person faces have large appearance variations (see examples in Fig. \ref{fig:instances} (a)) caused by diverse views and expressions as well as facial accessories (e.g., glasses and hats) and aging. The different lighting condition is also required to be considered in practice.
\item It is possible that only a few labeled samples are accessible at first, and the changes of personal faces are rather unpredictable over time, especially under the current scenarios that there are large amount of images swarmed into Internet every day.
\item Even though a few user interventions (e.g., labeling new samples) could be allowed, the user effort is desired to be kept minimizing over time.
\end{itemize}

Conventional incremental face recognition methods such as incremental subspace approaches \cite{iPCA,kim2007incremental} often fail on complex and large-scale environments. Their performances could be dropped drastically when the initial training set of face images is either insufficient or inappropriate. In addition, most of existing incremental approaches suffer from noisy samples or outliers in the model updating. In this work, we propose a novel active self-paced learning framework (ASPL) to handle the above difficulties, which absorbs powers of two recently rising techniques: active learning (AL) \cite{con_AL, keze_CEAL} and self-paced learning (SPL)~\cite{spcl,spld,spl_reranking}. In particular, our framework tends to conduct a ``Cost-less-Earn-more'' working manner: as much as possible pursuing a high performance while reducing costs.

The basic approach of the AL methods is to progressively select and annotate most informative unlabeled samples to boost the model, in which user interaction is allowed. The sample selection criteria is the key in AL, and it is typically defined according to the classification uncertainty of samples. Specifically, the samples of low classification confidence, together with other informative criteria like diversity, are generally treated as good candidates for model retraining. On the other hand, SPL is a recently proposed learning regime to mimic the learning process of humans/animals that gradually incorporates easy to more complex samples into training \cite{curriculun_learning,spl_kumar}, where an easy sample is actual the one of high classification confidence by the currently trained model. Interestingly, the two categories of learning methods select samples with the opposite criteria. This finding inspires us to investigate the connection between the two learning regimes and the possibility of making them complementary to each other. {Moreover, as pointed out in \cite{sid15tcsvt, Hu_2015_ICCV_Workshops}, learning based features are considered to be able to exploit information with better discriminative ability for face recognition, compared to the hand-crafted features. We thus utilize the deep convolutional neural network (CNN) \cite{lecun2010convolutional, keze_dpl} for feature extraction instead of using handcraft image features.}. In sum, we aim at designing a cost-effective and progressive learning framework, which is capable of automatically annotating new instances and incorporating them into training under weak expert recertification. In the following, we discuss the advantage of our ASPL framework in two aspects:  ``Cost-less'' and ``Earn-more''.

\begin{figure}
\center
\label{fig:instances}
\subfigure[]{
\begin{minipage}[b]{0.17\textwidth}
\raggedleft\includegraphics[width=1\textwidth]{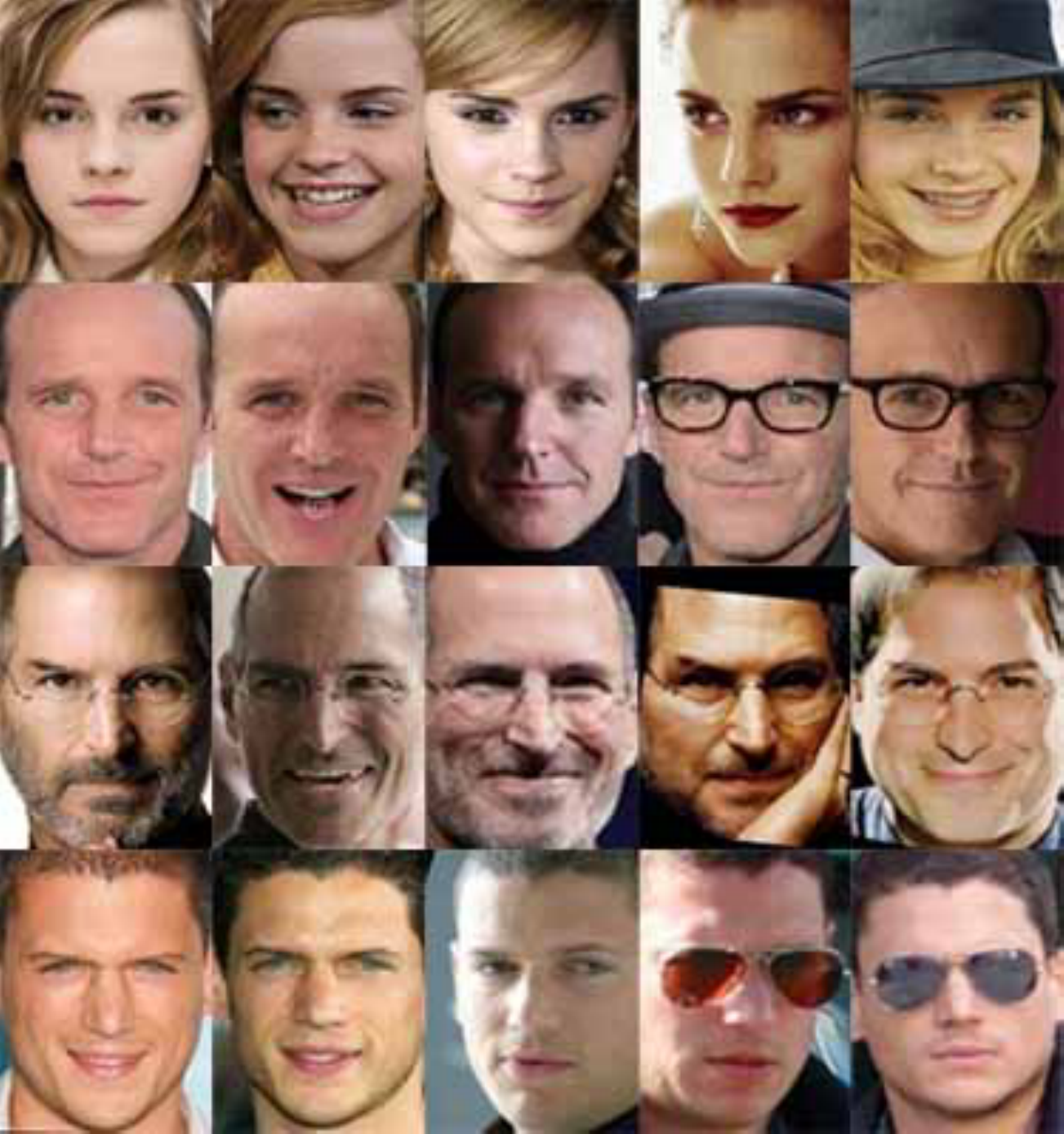}\\
\end{minipage}
}
\subfigure[]{
\begin{minipage}[b]{0.20\textwidth}
\raggedright\includegraphics[width=1\textwidth]{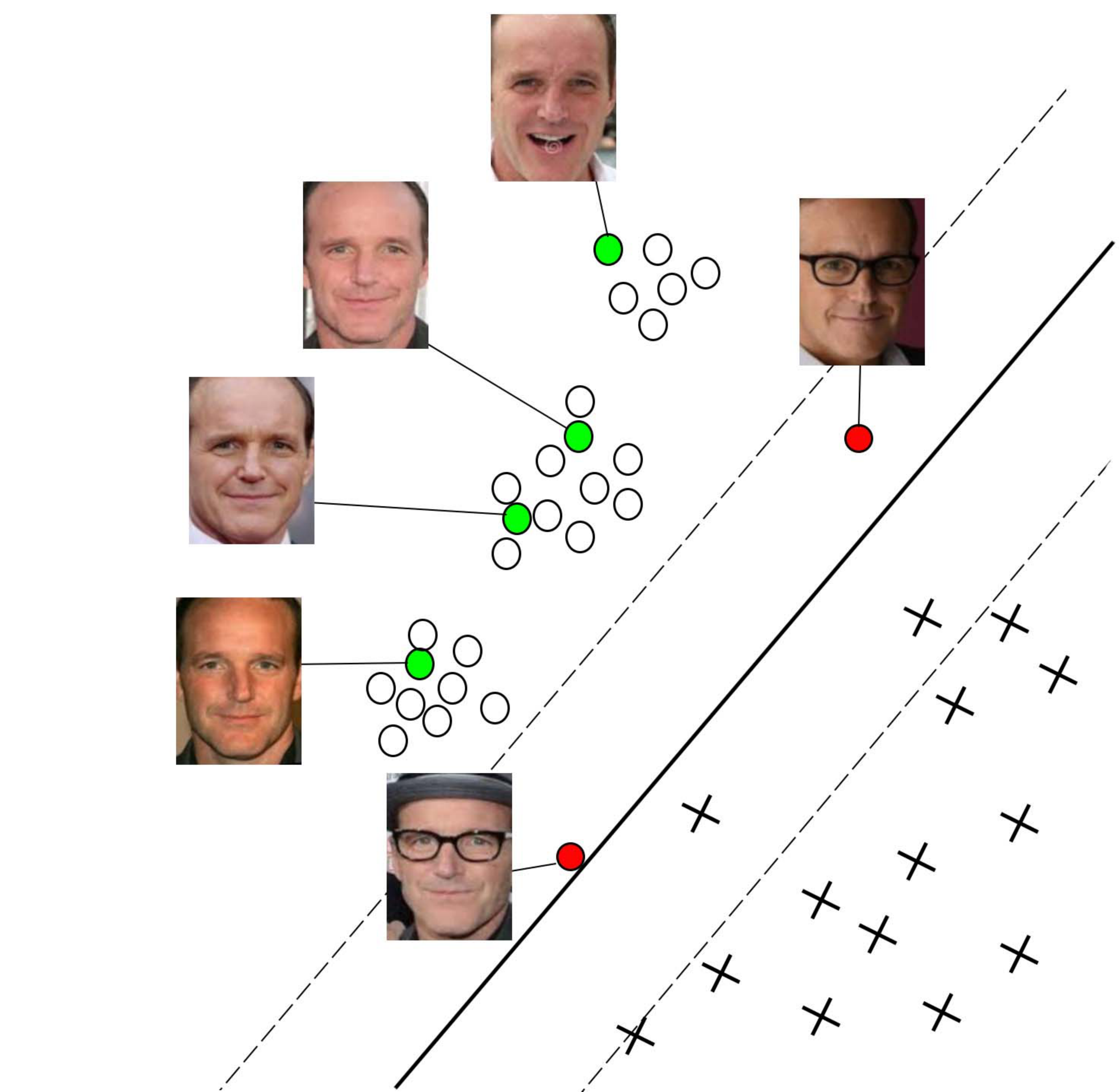}
\end{minipage}
}
\caption{Illustration of high- and low-confidence samples in the feature space. (a) shows a few face instances of different individuals, and these instances have large appearance variations. (b) illustrates how the samples distribute in the feature space, where samples of high classification confidence distribute compactly to form several clusters and low confidence samples are scattered and close to the classifier decision boundary.}
\end{figure}

(I) \textbf{Cost less}: Our framework is capable of building effective classifiers with less labeled training instances and less user efforts, compared with other state-of-the-art algorithms. This property is achieved by combining the active learning and self-paced learning in the incremental learning process. In certain feature space of model training as Fig. 1 (b) illustrates, samples of low classification confidence are scattered and close to the classifier decision boundary while high confidence samples distribute compactly in the intra-class regions. Our approach takes both categories of samples into consideration for classifier updating. The benefit of this strategy includes: i) High-confidence samples can be automatically labeled and consistently added into model training throughout the learning process in a self-paced fashion, particularly when the classifier becomes more and more reliable at later learning iterations. This significantly reduce the burden of user annotations and make the method scalable in large-scale scenarios. ii) The low-confidence samples are selected by allowing active user annotations, making our approach more efficiently pick up informative samples, more adapt to practical variations and converge faster, especially in the early learning stage of training.

(II) \textbf{Earn more}: The mixture of self-paced learning and active learning effectively improves not only the classifier accuracy but also the classifier robustness against noisy samples. From the perspective of AL, extra high-confidence samples are automatically incorporated into the retraining without cost of human labor in each iteration, and faster convergence can be thus gained. These introduced high-confidence samples also contribute to suppress noisy samples in learning, due to their compactness and consistency in the feature space. From the SPL perspective, allowing active user intervention generates the  reliable and diverse samples that can avoid the learning been misled by outliers. In addition, utilizing the CNN facilitates to pursue a higher classification performance by learning the convolutional filters instead of hand-craft feature engineering.

In brief, our ASPL framework includes two main phases. At the initial stage, we first learn a general face representation using an architecture of convolutional neural nets, and train a batch of classifiers with a very small set of annotated samples of different individuals. In the iteration learning stage, we rank the unlabeled samples according to how they relate to the current classifiers, and retrain the classifiers by selecting and annotating samples in either active user-query or self-paced manners. We can also make the CNN fine-tuned based on the updated classifiers.

The key point in designing such an effective interactive learning system is to make an efficient labor division between computers and human participants, i.e., we should possibly feed computable and faithful tasks into computers, and to possibly arrange labor-saving and intelligent tasks to humans \cite{ISed}. The proposed ASPL framework provides a rational realization to this task by automatically distinguishing high-confidence samples, which can be easily and faithfully recognized by computers in a self-paced way, and low-confidence ones, which can be discovered by requesting user annotation.

The main {\bf contributions} of this work are several folds. i) To the best of our knowledge, our work is the first one to make a face recognition framework capable of automatically annotating high-confidence samples and involve them into training without need of extra human labor in a purely self-paced manner under weak recertification of active learning. Especially in that along the learning process, we can achieve more and more pseudo-labeled samples to facilitate learning totally for free. Our framework is thus suitable in practical large-scale scenarios. The proposed framework can be easily extended to other similar visual recognition tasks. ii) We provide a concise optimization problem and theoretically interpret that the proposed ASPL is an rational implementation for solving this problem. iii) This work also advances the SPL development, by setting a dynamic curriculum variation. The new SPL setting better complies with the ``instructor-student-collaborative'' learning mode in human education than previous models. iv) Extensive experiments on challenging CACD and CASIA-WebFace datasets show that our approach is capable of achieving competitive or even better performance under only small fraction of sample annotations than that under overall labeled data. A dramatic reduction ($> 30 \%$) of user interaction is achieved over other state-of-the-art active learning methods.

The rest of the paper is organized as follows. Section II presents a brief review of related work. Section III overview the pipeline of our framework, followed by a discussion of model formulation and optimization in Section IV. The experimental results, comparisons and component analysis are presented in Section V. Section VI concludes the paper.

\section{Related Work}
\label{sec:related_work}

In this section, we first present a review for the incremental face recognition, and then briefly introduce related developments on active learning and self-paced learning.

{\bf Incremental Face Recognition. } There are two categories of methods addressing the problem of identifying faces with incremental data, namely incremental subspace and incremental classifier methods. The first category mainly includes the incremental versions of traditional subspace learning approaches such as principal component analysis (PCA) \cite{smith2002tutorial} and linear discriminant analysis (LDA) \cite{kim2007incremental}. These approaches map facial features into a subspace, and keep the eigen representations (i.e., eigen-faces) updated by incrementally incorporating new samples. And face recognition is commonly accomplished by the nearest neighbor-based feature matching, which is computational expensive when a large number of samples are accumulated over time. On the other hand, the incremental classifier methods target on updating the prediction boundary with the learned model parameters and new samples. Exemplars include the incremental support vector machines (ISVM) \cite{ISVM} and the online sequential forward neural network \cite{TNN-2006}. In addition, several attempts have been made to absorb advantages from both of the two categories of methods. For example, Ozawa et al., \cite{ozawa2005incremental} proposed to integrate the Incremental PCA with the resource allocation network in an iterative way. Although these mentioned approaches make remarkable progresses, they suffer from low accuracy compared with those of batch-based state-of-the-art face recognizers, and none of these approaches have been successfully validated on large-scale datasets (e.g., more than 500 individuals). And these approaches are basically studied in the context of fully supervised learning, i.e., both initial and incremental data are required to be labeled.

{\bf Active Learning.} This branch of works mainly focus on actively selecting and annotating the most informative unlabeled samples, in order to avoid unnecessary and redundant annotation. The key part of active learning is thus the selection strategy, i.e., which samples should be presented to the user for annotation.  One of the most common strategies is the certainty-based selection \cite{lewis1994sequential,tong2002support}, in which the certainties are measured according to the predictions on new unlabeled samples obtained from the initial classifiers. For example, Lewis et al., \cite{lewis1994sequential} proposed to take the most uncertain instance as the one that has the largest entropy on the conditional distribution over its predicted labels. Several SVM-based methods \cite{tong2002support} determine the uncertain samples as they are relatively close to the decision boundary. The sample certainty was also measured by applying a committee of classifiers in \cite{mccallumzy1998employing}.  These certainty-based approaches usually ignore the large set of unlabeled instances, and are thus sensitive to outliers. A number of later methods present the information density measure by exploiting the information of unlabeled data when selecting samples. For example, the informative samples are sequentially selected to minimize the generalization error of the trained classifier on the unlabeled data, based on a statistical approach \cite{joshi2009multi} or prior information \cite{kapoor2009faces}. In \cite{kapoor2007active,li2013adaptive}, instances are taken to maximize the increase of mutual information between the candidate instances and the remaining ones based on Gaussian Process models. The diversity of the selected instance over the unlabeled data has been also taken into consideration \cite{brinker2003incorporating}. Recently, Elhamifar et al., \cite{con_AL} presented a general framework via convex programming, which considered both the uncertainty and diversity measure for sample selection. However, these mentioned active learning approaches usually emphasize those low-confidence samples (e.g., uncertain or diverse samples) while ignoring the other majority of high-confidence samples. {To enhance the discriminative capability, wang~\cite{AUSDL15ICCV} et al. proposed a unified semi-supervised learning framework, which incorporates the high confidence coding vectors of unlabeled data into training under the proposed effective iterative algorithm, and demonstrate its effectiveness in dictionary-based classification. Our work inspires by this work, and also employs the high-confidence samples to improve both accuracy and robustness of classifiers.}

\begin{figure*}[!ht]
\begin{center}
\includegraphics[width=\textwidth]{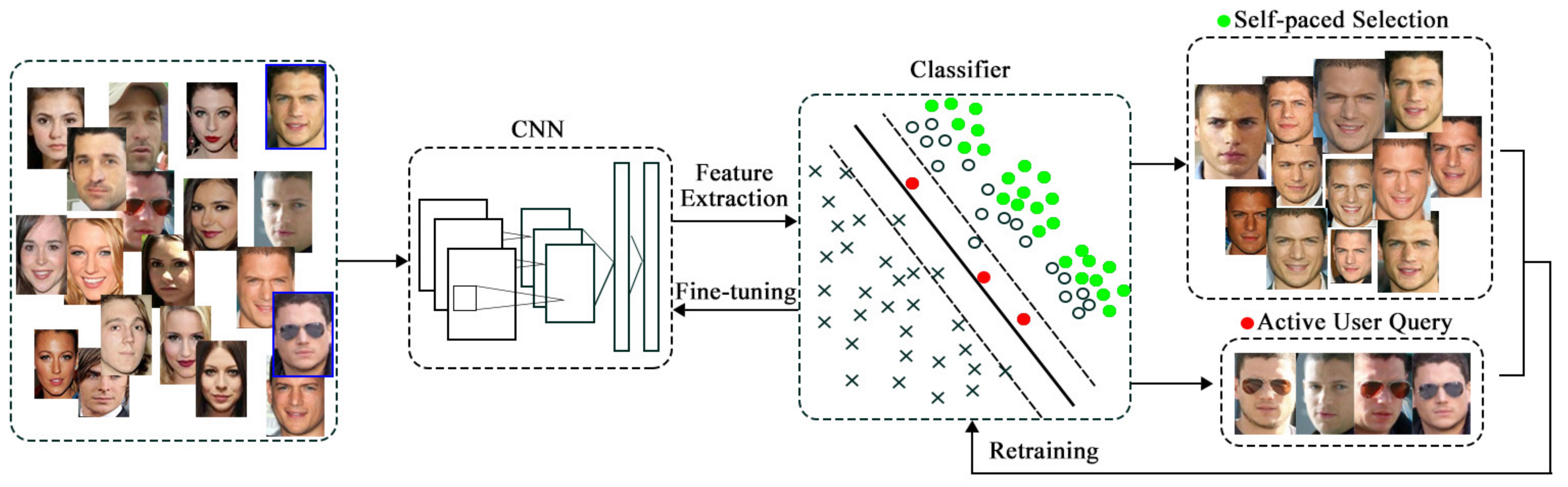}
\vspace{-20pt}
\caption{Illustration of our proposed cost-effective framework. The pipeline includes stages of CNN and model initialization; classifier updating; high-confidence sample labeling by the SPL, low-confidence sample annotating by AL and CNN fine-tuning, where the arrows represent the workflow. The images highlighted by blue in the left panel represent the initially selected samples.}
\label{fig:overview}
\vspace{-10pt}
\end{center}
\end{figure*}

{\bf Self-paced Learning.} Inspired by the cognitive principle of humans/animals, Bengio et al. \cite{curriculun_learning} initialized the concept of curriculum learning (CL), in which a model is learned by gradually including samples into training from easy to complex. To make it more implementable,  Kumar et al. \cite{spl_kumar} substantially prompted this learning philosophy by formulating the CL principle as a concise optimization model named self-paced learning (SPL). The SPL model includes a weighted loss term on all samples and a general SPL regularizer imposed on sample weights. By sequentially optimizing the model with gradually increasing pace parameter on the SPL regularizer, more samples can be automatically discovered in a pure self-paced way. Jiang et al. \cite{spcl,spmf,spl_reranking} provided more comprehensive understanding for the learning insight underlying SPL/CL, and formulated the learning model as a general optimization problem as:
\begin{equation}
\label{eq:spcl_obj}
\begin{split}
\!\min_{\mathbf{w},\mathbf{v}\in \lbrack 0,\!1]^{n}}
\sum_{i=1}^n v_i L(\mathbf{w}; \mathbf{x}_i, y_i) + f(\textbf{v};\lambda) \\ \text{  s.t. } \mathbf{v} \in {\bm \Psi}
\end{split}
\end{equation}
where $\mathcal{D} = \{(x_i,y_i)\}_{i=1}^n$ corresponds to the training dataset,
$L(\mathbf{w}; \mathbf{x}_i, y_i)$ denotes the loss function which calculates the cost between the objective label $y_i$ and the estimated one, $\mathbf{w}$ represents the model parameter inside the decision function, $\mathbf{v=[}v_{1},v_{2},\cdots ,v_{n}\mathbf{]}^{T}$ denote the weight variables reflecting the samples' importance. $\lambda$ is a parameter for controlling the learning pace, which is also referred as ``pace age''.

In the model, $f(\textbf{v};\lambda)$ corresponds to a self-paced regularizer. Jiang et al. abstracted three necessary conditions it should be satisfy \cite{spl_reranking,spcl}:
(1) $f(v;\lambda)$ is convex with respect to $v \in [0,1]$;
(2) The optimal weight of each sample should be monotonically decreasing with respect to its corresponding loss; and (3) The optimal weight of each sample should be monotonically decreasing with respect to the pace parameter $\lambda$.

In this axiomic definition, Condition 2 indicates that the model inclines to select easy samples (with smaller errors) in favor of complex samples (with larger errors). Condition 3 states that when the model ``age'' $\lambda$ gets larger, it embarks on incorporating more, probably complex, samples to train a ``mature'' model. The convexity in Condition 1 further ensures that the model can find good solutions.

$\Psi$ is the so called curriculum region that encodes the information of predetermined curriculums. Its axiomic definition contains two conditions \cite{spcl}: (1) It should be nonempty and convex; and (2) If $x_i$ is ranking before $x_j$ in curriculum (more important for the problem), the expectation $\int_{\Psi} v_i\,d\mathbf{v}$ should be larger than $\int_{\Psi} v_j\,d\mathbf{v}$. Condition 1 ensures the soundness for the calculation of this specific constraint, and Condition 2 indicates that samples to be learned earlier is supposed to have larger expected values.
This constraint weakly implies a prior learning sequence of samples, where the expected value for the favored samples should be larger.

The SPL model (\ref{eq:spcl_obj}) finely simulates the learning process of human education. Specifically, it builds an ``instructor-student collaborative'' paradigm, which on one hand utilizes prior knowledge provided by instructors as a guidance for curriculum designing (encoded by the curriculum constraint), and on the other hand leaves certain freedom to students to ameliorate the actual curriculum according to their learning pace (encoded by the self-paced regularizer). Such a model not only includes all previous SPL/CL methods as its special cases, but also
provides a general guild line to extend a rational SPL implementation scheme against certain learning task. Based on this framework, multiple SPL variations have been recently proposed, like SPaR \cite{spl_reranking}, SPLD \cite{spld}, SPMF \cite{spmf} and SPCL \cite{spcl}.

The SPL related strategies have also been recently attempted in a series of applications, such as specific-class segmentation learning~\cite{spl_kumar_segment}, visual category discovery~\cite{spl_Kristen}, long-term tracking~\cite{spl-tracking}, action recognition~\cite{spld} and background subtraction~\cite{spmf}. Especially, the SPaR method, constructed based on the general formulation (\ref{eq:spcl_obj}), was applied to the challenging SQ/000Ex task of the TRECVID MED/MER competition, and
achieved the leading performance among all competing teams~\cite{MED14}.

{\bf Complementarity between AL and SPL:}
It is interesting that the function of SPL is very complementary to that of AL. The SPL methods emphasize easy samples in learning, which correspond to the high-confidence intra-class samples, while AL inclines to pick up the most uncertain and informative samples for the learning task, which are always located in low-confidence area near classification boundaries. SPL is capable of easily attaining large amount of faithful pseudo-labeled samples with less requirement of human labors (by reranking technique~\cite{spl_reranking}. We will introduce details in Section 4), while tends to underestimate the roles of those most informative ones intrinsically configuring the classification boundaries; on the contrary, AL inclines to get informative samples, while need more human labors to manually annotate these samples with more carefully annotation.
We thus expect to effectively mix these two learning schemes to help incremental learning both improve the efficiency with less human labors (i.e., Cost Less) and achieve better accuracy and robustness of the learned classifier against noisy samples (i.e., Earn More). This constructs the basic motivation of our ASPL framework for face identification under large-scale scenarios.

\section{Framework Overview}
\label{sec:dlem_framework}

In this section, we illustrate how our ASPL model works. As illustrated in Fig. \ref{fig:overview}, the main stages in our framework pipeline include: CNN pretraining for face representation, classifier updating, high-confidence sample pseudo-labeling in a self-paced fashion, low-confidence sample annotating by active users, and CNN fine-tuning.

\emph{CNN pretraining}: Before running the ASPL framework, we need to pretrain a CNN for feature extraction based on a pre-given face dataset. These images are extra selected without overlapping to all our experimental data. {
Since several public available CNN architectures~\cite{alexnet, vgg19} have achieved remarkable success on visual recognition,  our framework supports to directly employ these architectures and their pretrained model as initialized parameters. In our all experiments, AlexNet~\cite{alexnet} is utilized. Given the extra selected of annotated samples, we further fine-tune the CNN for learning discriminative feature representation.
}

\emph{Initialization}: At the beginning, we randomly select few images for each individual, extract feature representation for them by pretrained CNN, and manually annotate labels to them as the starting point.

\emph{Classifier updating}: In our ASPL framework, we use one-vs-all linear SVM  as our classifier updating strategies. In the beginning, only a small part of samples are labeled, and we train an initial a classifier for every individual using these samples. As the framework gets mature, samples manually annotated by the AL and pseudo-labeled by the SPL are growing, we adopt them to retrain the classifiers.

\emph{High-confidence sample pseudo-labeling}: We rank the unlabeled samples by their important weights via the current classifiers, e.g., using the classification prediction hinge loss, and then assign pseudo-labels to the top-ranked samples of high confidences. This step can be automatically implemented by our system.

\emph{Low-confidence sample annotating}: Based on certain AL criterion obtained under the current classifiers, rank all unlabeled samples, select those top-ranked ones (most informative and generally with low-confidence) from the unlabeled samples, and then manually annotate these samples by active users.

\emph{CNN fine-tuning}: After several steps of the interaction, we make the neural nets fine-tuned by the backward propagation algorithm. All self-labeled samples by the SPL and manually annotated ones by the AL are added into the network, we utilize the softmax loss to optimize the CNN parameters via stochastic gradient decent approach.



\section{Formulation and Optimization}

In this section we will discuss the formulation of our proposed framework, and also provide a theoretical interpretation of its entire pipeline from the perspective of optimization. In specific, we can theoretically justify that the entire pipeline of this framework finely accords with a solving process for an active self-paced learning (ASPL) optimization model. Such a theoretical understanding will help deliver more insightful understanding on the intrinsic mechanism underlying the ASPL system.

\subsection{Active Self-paced Learning}
\label{sec:alg}
In the context of face identification, suppose that we have $n$ facial photos which are taken from $m$ subjects. Denote the training samples as $\mathcal{D}=\{
\mathbf{x}_{i}\}_{i=1}^{n} \subset R^{d}$, where $\mathbf{x}_{i}$ is the $d$-dimensional feature representation for the $i$th sample. We have $m$ classifiers for recognizing each sample by the one-vs-all strategy.

Learned knowledge from data will be utilized to ameliorate our model after a period of pace increasing. Correspondingly, we denote the label set of $\mathbf{x}_i$ as $\mathbf{y}_i = \{y_{i}^{(j)} \in \{-1,1\}\}_{j=1}^m$, where $y_{i}^{(j)}$ corresponds to the label of $\mathbf{x}_{i}$ for the $j$th subject. That is, if $y_{i}^{(j)}=1$, this means that $\mathbf{x}_{i}$ is categorized as a face from the $j$th subject.

On our problem setting, we should give two necessary remarks. One is that in our investigated face identification problems, almost all data have not been labeled before our system running. Only very small amount of samples are
annotated as the initialization. That is, most of $\{\mathbf{y}_{i}\}_{i=1}^n$ are unknown and needed to be completed in the learning process. In our system, a minority of them is manually annotated by the active users and a majority is pseudo-labeled in a self-paced manner. The other remark is that the data $\{\mathbf{x}_i\}_{i=1}^n$ might possibly been inputted into the system in an incremental way. This means that the data scale might be consistently growing.

Via the proposed mechanism of combining SPL and AL, our proposed ASPL model can adaptively handle both manually annotated and pseudo-labeled samples, and still progressively fit the consistently growing unlabeled data in such an incremental manner. The ASPL is formulated as follows: %
\begin{eqnarray}
\label{eq:obj}
\min_{\{\mathbf{w},\mathbf{b},\mathbf{v},\mathbf{y}_i \in \{-1, 1\}^m, i \notin \Omega^{\bm \lambda}\}} \sum_{j=1}^{m}
\frac{1}{2}\Vert \mathbf{w}^{(j)} \Vert_2^2 + \ \ \ \ \ \ \ \ \ \ \ \ \ \ \ \ \ \ \\  C \cdot
 L\left( \mathbf{w}^{(j)}, {b}^{(j)}, \mathcal{D},\mathbf{y}^{(j)}, \mathbf{v}^{(j)}\right)
 + \mathnormal{f}\left( \mathbf{v}^{(j)};{\lambda_j} \right) \nonumber \\
s.t.\quad \mathbf{v}\in {\bm \Psi} ^{\bm \lambda }, \nonumber
\end{eqnarray}
where $\mathbf{w=}\{\mathbf{w}^{(j)}\}_{j=1}^{m}\subset R^{d}$ and $\mathbf{%
b=\{}b^{(j)}\mathbf{\}}_{j=1}^{m}\subset R$ represent the weight and bias
parameters of the decision functions for all $m$ classifiers. $C (C>0)$ is the standard regularization parameter trading off the loss function and the margin, and we set $C=1$ in our experiments. $\mathbf{v}=\{[%
v_{1}^{(j)},v_{2}^{(j)},\cdots ,v_{n}^{(j)}]^{T}\}_{j=1}^m$ denotes the weight variables reflecting the training samples' importance, and $\lambda_j$ is a parameter (i.e. the pace age) for controlling the learning pace of the $j$th classifier. $\mathnormal{f}\left( \mathbf{v}^{(j)};\lambda_j\right) $ is the self-paced regularizer controlling the learning scheme.
We denote the index collection of all currently active annotated samples as $\Omega^{\bm \lambda}=\cup_{j=1}^m\{\Omega^{\lambda_j}\}$, where $\Omega^{\lambda_j}$ corresponds to the set of the $j$th subject with the pace age $\lambda_j$. Here $\Omega^{\bm \lambda}$ is introduced as a constraint on $\mathbf{y}_i$. ${\bm \Psi} ^{\bm \lambda}=\cap_{i=1}^{n} \{\Psi_{i}^{\bm \lambda}\}$ composes of the curriculum constraint of the model at the $m$ classifiers' pace age ${\bm \lambda}={\{\lambda_j\}}_{j=1}^m$. In particular, we specify two alternative types of the curriculum constraint for each sample $\mathbf{x}_{i}$, as:
\begin{itemize}
\item  $\Psi_{i}^{\bm \lambda} = [0,1]$ is for the pseudo-labeled sample, i.e., $i\notin \Omega ^{\bm \lambda}$. Then, its importance weights with respect to all the classifiers $\{v_{i}^{(j)}\}_{j=1}^m$ need to be learned in the SPL optimization.

\item $\Psi _{i}^{\bm \lambda}=\{1\}$ is for the sample annotated by the AL process, i.e., $\exists j \ \ s.t. \ \ i \in \Omega ^{\lambda_j}$. Thus, its importance weights are deterministically set during the model training, i.e., $v_{i}^{(j)}=1$.
\end{itemize}

Each type of the curriculums will be detailedly interpreted in Section \ref{sec:related_work}. Note that different from the previous SPL settings, this curriculum $\Psi _{i}^{\bm \lambda}$ can be dynamically changed with respect to all the pace ages  ${\bm \lambda}$ of $m$ classifiers. This conducts the superiority of our model, as we discuss in the end of this section.

{
We then define the loss function $L\left( \mathbf{w}^{(j)},{b}^{(j)}, \mathcal{D},\mathbf{y}^{(j)}, \mathbf{v}^{(j)}\right) $ on $\mathbf{x}$ as:%
\begin{equation}
\begin{split}
&L\left( \mathbf{w}^{(j)},{b}^{(j)}, \mathcal{D},\mathbf{y}^{(j)}, \mathbf{v}^{(j)}\right) \\
=& \sum_{i=1}^n v_i^{(j)} l\left( \mathbf{w}^{(j)},b^{(j)};\mathbf{x}_{i},y_{i}^{(j)}\right) \\
=& \sum_{i=1}^n v_i^{(j)} \left( 1-y_{i}^{(j)}(\mathbf{w}^{(j)T}\mathbf{x}_{i} + b^{(j)}) \right)_{+} \\
& \text{s.t.} \sum_{j=1}^m |y_i^{(j)} + 1 | \le 2, y_i^{(j)}\in \{-1, 1\}, i \notin \Omega^{\lambda},
\end{split}
\end{equation}
where $\left( 1-y_{i}^{(j)}(\mathbf{w}^{(j)T}\mathbf{x}_{i} + b^{(j)}) \right)_{+}$ is the hinge loss of $\mathbf{x}_{i}$ in the $j$th classifier. The cost term corresponds to the summarized loss of all classifiers, and the constraint term only allows two kinds of feasible solutions: i) for any $i$, there exists $y_{i}^{(j)}=1$ while for all other $y_i^{(k)}=-1$ for all $k \not=j$; ii) $y_{i}^{(j)} = -1$ for all $j=1, 2, \cdots, m$ (i.e., background or an unknown person class). These samples {$\mathbf{x}_i$} will be added into the unknown sample set $U$. It is easy to see that such constraint complies with real cases where a sample should be categorized into one pre-specified subject or not classified into any of the current subjects.}

Referring to the known alternative search strategy, we can then solve this
optimization problem. Specifically, the algorithm is designed by alternatively
updating the classifier parameters $\mathbf{w},\mathbf{b}$ via one-vs-all SVM, the sample importance weights $\mathbf{v}$ via the SPL, the pseudo-label $\mathbf{y}$ via reranking. Along with gradually increasing pace parameter ${\bm \lambda}$, the optimization updates: i) the curriculum constraint ${\bm \Psi}^{\bm \lambda}$ via AL and ii) the feature representation via CNN fine-tuning.  In the following we introduce the details of these optimization steps, and give their physical interpretations. The correspondence of this algorithm to the practical implementation of the ASPL system will also be discussed in the end.

\textbf{Initialization}: As introduced in the framework, we initialize our system running by using pre-trained CNN to extract feature representations of all samples $\{\mathbf{x}_{i}\}_{i=1}^{n}$. Set an initial $m$ classifiers' pace parameter set ${\bm \lambda}=\{\lambda_j\}_{j=1}^{m}$. Initialize the curriculum constraint ${\bm \Psi}^{\bm \lambda}$ with currently user annotated samples $\Omega^{\bm \lambda}$ and corresponding $\{\mathbf{y}^{(j)}\}_{j=1}^{m}$ and $\mathbf{v}$.

\textbf{Classifier Updating}: This step aims to update the classifier parameters $\{\mathbf{w}^{(j)},b^{(j)}\}_{j=1}^m$ by one-vs-all SVM. Fixing $\{\{\mathbf{x}_i\}_{i=1}^n,\mathbf{v},\{\mathbf{y}_{i}\}_{i=1}^{n}, {\bm \Psi}^{\bm \lambda}\}$, the original ASPL model Eqn.~(\ref{eq:obj}) can be simplified into the following form:

\begin{equation}
\min_{\mathbf{w},\mathbf{b}} \sum_{j=1}^{m} \frac{1}{2} \Vert \mathbf{w}^{(j)} \Vert_2^2 + C\sum_{i=1}^n v_{i}^{(j)}l\left( \mathbf{w}^{(j)},b^{(j)};\mathbf{x}_{i},y_{i}^{(j)}\right), \nonumber
\end{equation}
which can be equivalently reformulated as solving the following independent sub-optimization problems for each classifier $j=1,2,\cdots, m$:
\begin{equation}
\min_{\mathbf{w}^{(j)},b^{(j)}} \frac{1}{2} \Vert \mathbf{w}^{(j)} \Vert_2^2 +  C\sum_{i =1}^n v_{i}^{(j)}l\left( \mathbf{w}
^{(j)},b^{(j)};\mathbf{x}_{i},y_{i}^{(j)}\right).
\end{equation}
This is a standard one-vs-all SVM model with weights by taking one-class sample as positive while all others as negative. Specifically, when the weights $v_{i}^{(j)}$ are only of values $\{0,1\}$, it corresponds to a simplified SVM model under sampled instances with $v_{i}^{(j)}=1$; otherwise when $v_i^{j}$ sets values from $[0,1]$, it corresponds to the weighted SVM model. And both of them can be readily solved by many off-the-shelf efficient solvers. Thus, this step can be interpreted as implementing one-vs-all SVM over instances manually annotated from the AL and self-annotated from the SPL.

\textbf{High-confidence Sample Labeling}: This step aims to assign pseudo-labels $\mathbf{y}$ and corresponding important weights $\mathbf{v}$ to the top-ranked samples of high confidences.

We start by employing the SPL to rank the unlabeled samples according to their importance weights $\mathbf{v}$. Under fixed $\{\mathbf{w,b},\{\mathbf{x}_i\}_{i=1}^n, \{\mathbf{y}_i\}_{i=1}^n, {\bm \Psi}^{\bm \lambda}\}$, our ASPL model in Eqn.~(\ref{eq:obj}) can be simplified to optimize $\mathbf{v}$ as:%

\begin{equation}
\begin{gathered}
\min_{\mathbf{v}\in \left[0,1\right]}\sum_{j=1}^m C \sum_{i=1}^n v_{i}^{(j)}l\left(
\mathbf{w}^{(j)},{b}^{(j)};\mathbf{x}_{i},y_{i}^{(j)}\right) +%
\mathnormal{f}\left( \mathbf{v}^{(j)};\lambda_j \right), \\
s.t.\quad \mathbf{v}\in {\bm \Psi} ^{\bm \lambda }.
\end{gathered}
\end{equation}

This problem then degenerates to a standard SPL problem as in Eqn.(\ref{eq:spcl_obj}). Since both the self-paced regularizer $\mathnormal{f}( \mathbf{v}^{(j)};\lambda_j)$ and the curriculum constraint ${\bm \Psi ^{\bm \lambda}}$ is convex (with respect to $\mathbf{v}$), various existing convex optimization techniques, like the gradient-based or interior-point methods, can be used for solving it. Note that we have multiple choices for the self-paced regularizer, as those built in \cite{spl_reranking}\cite{spld}. All of them comply with three axiomic conditions required for a self-paced regularizer, as defined in Section \ref{sec:related_work}.




Based on the second axiomatic condition for self-paced regularizer, any of the above $\mathnormal{f}( \mathbf{v}^{(j)};\lambda_j)$ inclines to conduct larger weights on high-confidence (i.e., easy) samples with less loss values while vice versa, which evidently facilitates the model with the ``learning from easy to hard'' insight.
In all our experiments, we utilize the linear soft weighting regularizer
due to its relatively easy implementation and well adaptability to complex scenarios. This regularizer penalizes the sample weights linearly in terms of the loss. Specifically, we have

\begin{equation}\label{eq:regularizer}
f({\bf v}^{(j)}, \lambda_j) = \lambda_j ( \frac{1}{2} \Vert {\bf v}^{(j)} \Vert_2^2 - \sum_{i=1}^n v_i^{(j)}),
\end{equation}
where $\lambda_j > 0$. Eqn. (\ref{eq:regularizer}) is convex with respect to $\mathbf{v}^{(j)}$, and we can thus search for its global optimum by computing the partial gradient equals. Considering $v_{i}^{(j)} \in [0,1]$, we deduce the analytical solution for the linear soft weighting, as,

\begin{equation}\label{eq:v}
v_i^{(j)} = \left\{
\begin{array}{c}
 - \frac{C \ell_{ij}}{\lambda_j} + 1, \\
0,%
\end{array}
\begin{array}{c}
C \ell_{ij} < \lambda_j \\
\text{otherwise,}%
\end{array}%
\right.
\end{equation}
where $\ell_{ij} = l\left( \mathbf{w}^{(j)},b^{(j)};\mathbf{x}_i,y_{i}^{(j)}\right)
$ is the loss of $\mathbf{x}_{i}$ in the $j$th classifier. Note that the deducing way to Eqn. (\ref{eq:v}) is similar with in \cite{spl_reranking}, but our resulting solution is different since our ASPL model in Eqn. (\ref{eq:obj}) is new.

{
After obtaining the weight $\mathbf{v}$ for all unlabeled samples ($i \notin \Omega^{\lambda}$) according to the optimized $\mathbf{v}^{(j)}$ in a descending order. Then we consider the samples with larger important weight than others are high confidences. We form these samples into high-confidence sample set $\mathcal{S}$ and assign them pseudo-labels: Fixing \{$\mathbf{w,b},\{\mathbf{x}_i\}_{i=1}^n, {\bm \Psi
^{\bm \lambda}}, \mathbf{v}$\}, we optimize $\mathbf{y}_{i}$ of Eqn.~(\ref{eq:obj}) which corresponds to solve:%
\begin{equation}
\label{eq:y}
\begin{split}
&\min_{\mathbf{y}_{i} \in \{-1, 1\}^m, i \in \mathcal{S}} \sum_{i=1}^{n} \sum_{j=1}^{m}  v_{i}^{(j)} \ell_{ij} \\
&\text{s.t.}, \sum_{j=1}^m |y_i^{(j)} + 1 | \le 2.
\end{split}
\end{equation}
where $\mathbf{v}_i$ is fixed and can be treated as constant. When $\mathbf{x}_i$ belongs to a certain person class,  Eqn.~(\ref{eq:y}) has an optimum, which can be exactly extracted by the Theorem~\ref{theorem1}. The proof is specified in the supplementary material.

Denote those $j$s that satisfy $\mathbf{w}^{(j)T}\mathbf{x}_{i}+b^{(j)}\neq 0$ and $v_{i}^{(j)}\in (0,1]$ as a set $M$ and set all $y_{i}^{(j)}=-1$ for others in default \footnote{$v_i^{(j)}=0$ actually implies that the $i$-th sample is with low-confidence to be annotated as the $j$-th class, and thus it is natural to pseudo-label it as a negative sample for the $j$-th class. $w^Tx+b=0$ implies that a sample is located in the classification boundary of the $j$-th class, and thus it is also a low-confidence $j$-class sample and thus we directly annotate it as negative. Actually, for these samples, pseudo-label them as positive or negative will not affect the value of the objective function of Eq.~(\ref{eq:y}). We tend to annotate these low-confidence samples as negative since due to the constraint of Eq. (\ref{eq:y}) (at most one positive class one sample is allowed to be annotated), this will not influence selecting a more rational positive class for each sample.}. The solution of Eqn.~(\ref{eq:y}) for $y_{i}^{(j)},~j\in M$ can be obtained by the following theorem.

\begin{theorem}
\label{theorem1}
\text{}

(a) If ~~$\forall j \in M$,~~$\mathbf{w}^{(j)T}\mathbf{x}_{i}+b^{(j)}<0$,  Eqn.~(\ref{eq:y}) has a solution:
\begin{equation*}
y_{i}^{(j)}={-1},~~~j=1,...,m;
\end{equation*}

(b) When ~~~$\forall j \in M$~except $j=j^{\ast }$, $\mathbf{w}^{(j)T}\mathbf{x}_{i}+b^{(j)}<0$, i.e., $v_{i}^{(j^*)}\ell_{ij^*}>0$, then Eqn.~(\ref{eq:y}) has a solution:
\begin{equation*}
y_{i}^{(j)}=\left\{
\begin{array}{c}
-1,\text{ }j\neq j^{\ast } \\
~~~1,\text{ }j=j^{\ast }
\end{array}
;\right.
\end{equation*}

(c) Otherwise, Eqn.~(\ref{eq:y}) has a solution:
\begin{equation*}
y_{i}^{(j)}=\left\{
\begin{array}{c}
-1,\text{ }j\neq j^{\ast } \\
~~~1,\text{ }j=j^{\ast }
\end{array}
,\right.
\end{equation*}
where
\begin{equation}
\begin{split}
j^{\ast }=\arg \underset{1\leq j\leq m}{\min }v_{i}^{(j)} \left( \ell_{ij}  -
 \left( 1+( \mathbf{w}^{^{(j)}T}\mathbf{x}_{i}+b^{(j)})
\right) _{+} \right). 
\end{split}
\end{equation}
\end{theorem}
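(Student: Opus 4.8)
The plan is to exploit the fact that the objective $\sum_{i}\sum_{j}v_i^{(j)}\ell_{ij}$ separates additively over the sample index $i$, while the constraint $\sum_{j}|y_i^{(j)}+1|\le 2$ couples only the coordinates of a single $\mathbf{y}_i$. Hence it suffices to fix one sample $\mathbf{x}_i$ and minimize $\phi(\mathbf{y}_i):=\sum_{j=1}^m v_i^{(j)}\ell_{ij}$ over $\mathbf{y}_i\in\{-1,1\}^m$ subject to that single constraint. The first step is to rewrite the constraint combinatorially: since $|y+1|$ equals $2$ for $y=1$ and $0$ for $y=-1$, the feasible set consists exactly of those $\mathbf{y}_i$ having at most one coordinate equal to $+1$, i.e. either $y_i^{(j)}=-1$ for all $j$, or there is a unique $j^\ast$ with $y_i^{(j^\ast)}=1$ and $y_i^{(j)}=-1$ for $j\neq j^\ast$.

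Next I would write $s_j:=\mathbf{w}^{(j)T}\mathbf{x}_i+b^{(j)}$ and record that $\ell_{ij}=(1-s_j)_+$ when $y_i^{(j)}=1$ and $\ell_{ij}=(1+s_j)_+$ when $y_i^{(j)}=-1$. For indices $j\notin M$ the term $v_i^{(j)}\ell_{ij}$ does not depend on the sign of $y_i^{(j)}$: it is $0$ when $v_i^{(j)}=0$, and it equals $v_i^{(j)}$ regardless of the label when $s_j=0$. These coordinates may therefore be fixed to $-1$ with no loss of optimality, which is precisely the content of the footnote, and the problem reduces to deciding whether to flip exactly one coordinate $j^\ast\in M$ to $+1$. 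Comparing the all-negative configuration against the one with $j^\ast$ flipped, the change in objective value is $\Delta_{j^\ast}=v_i^{(j^\ast)}\bigl[(1-s_{j^\ast})_+-(1+s_{j^\ast})_+\bigr]$, so the minimum of $\phi$ over feasible labelings equals the all-negative value plus $\min\bigl(0,\min_{j^\ast\in M}\Delta_{j^\ast}\bigr)$.

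The core step is then a one-variable analysis of $g(s):=(1-s)_+-(1+s)_+$, which is continuous, piecewise linear and strictly decreasing, with $g(s)>0$ for $s<0$, $g(0)=0$, and $g(s)<0$ for $s>0$. Since $v_i^{(j)}>0$ for $j\in M$, we get $\Delta_j<0$ exactly when $s_j>0$, and the most negative increment is attained at the index minimizing $v_i^{(j)}g(s_j)=v_i^{(j)}\bigl(\ell_{ij}-(1+s_j)_+\bigr)$ (with $\ell_{ij}=(1-s_j)_+$), which is the argmin expression in part (c); note that extending this argmin over all $1\le j\le m$ is harmless because for $j\notin M$ that quantity is exactly $0$ and so never strictly improves on a genuinely negative increment. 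The three cases now fall out: if $s_j<0$ for every $j\in M$, all $\Delta_j>0$ and the all-negative labeling is optimal (case (a)); if exactly one $j^\ast\in M$ has $s_{j^\ast}\ge 0$ — equivalently $s_{j^\ast}>0$, since $j^\ast\in M$, which is how the parenthetical ``i.e.'' pins down $j^\ast$ — then $\Delta_{j^\ast}$ is the unique negative increment and that coordinate is set to $+1$ (case (b)); otherwise the optimal positive index is the global minimizer $j^\ast$ of $v_i^{(j)}g(s_j)$ (case (c)), with ties broken arbitrarily since tied indices yield the same objective value.

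I expect the only real friction to be bookkeeping rather than anything conceptual: keeping the truncation cases of $(\cdot)_+$ straight when verifying the sign pattern and monotonicity of $g$, excluding the degenerate indices ($v_i^{(j)}=0$ or $s_j=0$) cleanly so that the reduction to $M$ is rigorous, and confirming that replacing an $\arg\min$ over $M$ by the global $\arg\min$ over all classes does not change the selected solution. The argument is essentially this piecewise scalar case analysis wrapped around the separability observation, so I do not anticipate a hard obstacle.
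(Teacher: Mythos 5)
Your proposal is correct and follows essentially the same route as the paper's proof: the paper's Lemmas 1 and 2 are your sign analysis of $(1-s)_+$ versus $(1+s)_+$ (weighted by $v_i^{(j)}$), the reduction that fixes coordinates with $v_i^{(j)}=0$ or $\mathbf{w}^{(j)T}\mathbf{x}_i+b^{(j)}=0$ to $-1$ is stated there verbatim, and your increment comparison $\Delta_{j'}-\Delta_{j^\ast}$ is exactly the paper's computation of $F(j')-F(j^\ast)\ge 0$ in case (c). Your explicit remark that extending the $\arg\min$ from $M$ to all $1\le j\le m$ is harmless is a small point the paper glosses over, but otherwise the arguments coincide.
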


Actually, only those high-confidence samples with positive weights, as calculated in the last updating step for $\mathbf{v}$, are meaningful for the solution. This implies the physical interpretation for this optimization step: we iteratively find the high-confidence samples based on the current classifier, and then enforce pseudo-labels $\mathbf{y}_{i}$ on those top-ranked high-confidence ones ($i \in \mathcal{S}$). This is exactly the mechanism underlying a reranking technique~\cite{spl_reranking}.
}

The above optimization process can be understood as the self-learning manner of a student. The student tends to pick up most high-confident samples,
which imply easier aspects and faithful knowledge underlying data, to learn,
under the regularization of the pre-designed curriculum ${\bm \Psi}^{\bm \lambda }$. Such regularization inclines to rectify his/her learning process so as to avoid him/her stuck into a unexpected overfitting point.

{

\textbf{Low-confidence Sample Annotating}:
After pseudo-labeling high-confidence samples in such a self-paced uncertainty modeling, we employ AL fashion to update the curriculum constraint $\mathbf{ \Psi}^\mathbf{ \lambda}$ in the model by supplementing more informative curriculums based on human knowledge. The AL process aims to select most low-confidence unlabeled samples and to annotate them as either positive or negative by requesting user annotation. Our selection criteria are based on the classical uncertainty-based strategy \cite{lewis1994sequential,tong2002support}. Specifically, given the current classifiers, we randomly collected a number of randomly unlabeled samples, which are usually located in low-confidence area near the classification boundaries.

{ \textit{1) Annotated Sample Verifying:}
Considering the user annotation may contain outliers (incorrectly annotated samples), we introduce a verification step to correct the wrongly annotated samples. Assuming that labeled samples with lower prediction scores from the current classifiers have higher probability of being incorrectly labeled, we propose to ask the active user to verify their annotations on these samples. Specifically, in this step we first employ the current classifiers to obtain the prediction scores of all the annotated samples. Then we re-rank them and select Top-$L$ ones with lowest prediction scores and ask the user to verify these selected samples, i.e., double-checking them. We can set L as a small number ($L$ = 5 in our experiments), since we do believe the chance of human making mistakes is low. In sum, we improve the robustness of the AL process by further validating Top-L most uncertain samples with the user. In this way, we can reduce the effects of accumulated human annotation errors and enable the classifier to be trained in a robust manner.}

{ \textit{2) Low-confidence Definition}:
When we utilize the current classifiers ($m$ classifiers for discriminating $m$ object categories) to predict the label of unlabeled samples, those predicted as more than two positive labels (i.e., predicted as the corresponding object category) actually represent these samples making the current classifiers ambiguous. We thus adopt them as so called "low-confident" samples and require active user to manually annotate them. Actually, in this step, other "low-confidence" criterion can be utilized. We employed this simple strategy just due to its intuitive rationality and efficiency.
}

After users perform manual annotation, we update the $\mathbf{ \Psi}^\mathbf{ \lambda}$ by additionally incorporating those newly annotated sample set $\phi$ into the current curriculum $\mathbf{ \Psi}^\mathbf{ \lambda}$. For each annotated sample, our AL process includes the following two operations: i) Set its curriculum constraint, i.e., $\{\Psi_{i}^\mathbf{ \lambda}\}_{i \in \phi} =\{1\}$; ii) Update its labels $\{\mathbf{y}_{i}\}_{i \in \phi}$ and add its index into the set of currently annotated samples $\Omega^\mathbf{ \lambda }$.
Such specified curriculum still complies with the axiomic conditions for the curriculum constraint as defined in~\cite{spcl}. For those annotated samples, the corresponding $\Psi_{i}^\mathbf{ \lambda }=\{1\}$ with expectation value $1$ over the whole set, while for others $\Psi_{i}^\mathbf{ \lambda}=[0,1]$ with expectation value $1/2$. Thus the more informative samples still have a larger expectation than the others. Also, it is easy to see $\mathbf{ \Psi}^\mathbf{ \lambda}$\ is non-empty and convex. It thus complies traditional curriculum understanding.

\textbf{New Class Handling}: After the AL process, if active user annotates the selected unlabeled samples with $u$ unseen person classes, new classifiers for these unseen classes are needed to be initialized without affecting the existed classifiers. Moreover, there is another difficulty that the samples of the new class are not enough for classifier training. Thanks to the proposed ASPL framework, we can employ the following four steps to address above mentioned issues.
\begin{enumerate}
\item For each of these new class samples, search all the unlabeled samples and pick out its $K$-nearest neighbors from the unseen class set $U$ in the feature space;
\item Require active user to annotate these selected neighbors to enrich the positive samples for these new person classes;
\item Initialize and update $\{\mathbf{w}^{(j)}, b^{(j)}, \mathbf{v}^{(j)},\mathbf{y}^{(j)}, \lambda_{j}\}_{j=m+1}^{m+u}$ for these new person classes according to above mentioned iteration process of \{\emph{initialization, classifier updating, high-confidence sample labeling, low-confidence sample annotating}\}.
\end{enumerate}

}

\begin{algorithm}
\caption{The sketch of ASPL framework} \label{alg:alg_overview}
\begin{algorithmic}[1]
\REQUIRE Input dataset $\{\mathbf{x}_{i}\}_{i=1}^{n}$
\ENSURE Model parameters $\mathbf{w}$, $\mathbf{b}$
\STATE Use pre-trained CNN to extract feature representations of $\{\mathbf{x}_{i}\}_{i=1}^{n}$. Initialize multiple annotated samples into the curriculum ${\bm \Psi}^{\bm \lambda}$ and corresponding $\{\mathbf{y}_{i}\}_{i=1}^{n}$ and $\mathbf{v}$. Set an initial pace parameter ${\bm \lambda}=\{\lambda^0\}^m$.
\\

\textbf{while} not converged do
\STATE \ \ \ Update $\mathbf{w,b}$ by one-vs-all SVM
\STATE \ \ \ Update $\mathbf{v}$ by the SPL via Eqn.~(\ref{eq:v})
\STATE \ \ \ {Pseudo-label high-confidence samples $\{\mathbf{y}_i\}_{i \in \mathcal{S}}$ by the reranking via Eqn.~(\ref{eq:y})}
\STATE \ \ \ {Update the unclear class set $U$}
{ \STATE \ \ \ Verify the annotated samples by AL.}
\STATE \ \ \ {Update low-confidence samples $\{\mathbf{y}_i, \Psi_i^\lambda\}_{i \in \mathcal{\phi}}$ by the AL} \\
{\ \ \ \textbf{if} $u$ unseen classes have labeled, \\
\ \ \ \ \ \ \ \ Handle $u$ new classes via the steps in Sect.~\ref{sec:alg}\\
\ \ \ \ \ \ \ \ Go to the step 2 \\
\ \ \ \ \textbf{end if}
}

\STATE \ \ \ In every $T$ iterations: {
\begin{itemize}
\item Update $\{\mathbf{x}_{i}\}_{i=1}^{n}$ through fine-tuning CNN
\item Update ${\bm \lambda}$ according to Eqn.~(\ref{eq:lambda})
\end{itemize} }
\STATE
\textbf{end while}
\RETURN $\mathbf{w,b}$;
\end{algorithmic}
\end{algorithm}

This step corresponds to the instructor's role in human education, which aims to guide a student to involve more informative curriculums in learning. Different from the previous fixed curriculum setting in SPL throughout the learning process, here the curriculum is dynamically updated based on the self-paced learned knowledge of the model. Such an improvement better simulates the general learning process of a good student. With the learned knowledge of a student increasing, his/her instructor should vary the curriculum settings imposed on him from more in the early stage to less in later. This learning manner evidently should conduct a better learning effect which can well adapt the personal information of the student.

\textbf{Feature Representation Updating}:
After several of the SPL and AL updating iterations of \{$\mathbf{w,b},\{\mathbf{y}_{i}\}_{i=1}^{n},\mathbf{v},{\bm \Psi}^{\bm \lambda}$\}, we now aim to update the feature representation $\{\mathbf{x}_{i}\}_{i=1}^{n}$ through finetuning the pretrained
CNN by inputting all manually labeled samples from the AL and self-annotated
ones from the SPL. These samples tend to deliver data knowledge into the
network and improve the representation of the training samples. A better feature
representation is thus expected to be extracted from this ameliorated CNN.

This learning process simulates the updating of the knowledge structure of a human brain after a period of domain learning. Such updating tends to facilitate a person grasp more effective features to represent newly coming samples from certain domain and make him/her with a better learning performance. In our experiments, we generally conduct the CNN feature fine-tuning after around $50$ rounds of the SPL and AL updating, and the learning rate is set as 0.001 for all layers.

\textbf{Pace Parameter Updating}: We utilize a heuristic strategy to update pace parameters $\{\lambda_j\}_{j=1}^m$ for $m$ classifiers in our implementation.

After multiple iterations of the ASPL, we specifically set the pace parameter $\lambda_j$ for each individual classifier, and utilize a heuristic strategy in our implementation for parameter updating. For the $t$th iteration, we compute the pace parameter for optimizing Eqn.~(\ref{eq:obj}) by :
\begin{equation}
\label{eq:lambda}
\lambda_j^t = \left\{
\begin{array}{c}
\text{ }\lambda^0,\text{ } \ \ \ \ \ \ \ \ \ \ \ \ \ \ \ \ \ \ \ \ t = 0 \\
\text{} \lambda_j^{(t-1)} + \alpha * \eta_j^t, \ \ \ \ \ \ \text{ } 1 \le t \le  \tau\\
\text{} \lambda_j^{(t-1)}, \ \ \ \ \ \ \ \ \ \ \ \ \ \ \ \ \ \ t > \tau,
\end{array}
\right.
\end{equation}
where $\eta_j^t$ is the average accuracy of the $j$-th classifier in the current iteration, and $\alpha$ is a parameter which controls the pace increasing rate. In our experiments, we empirically set $\{\lambda^0, \alpha \}=\{0.2, 0.08\}$. Note that the pace parameters ${\bm \lambda}$ should be stopped when all training samples are with $\mathbf{v}=\{1\}$. Thus, we introduce an empirical threshold $\tau$ constraining that ${\bm \lambda}$ is only updated in early iterations, i.e., $t \le \tau$. $\tau$ is set as 12 in our experiments.

The entire algorithm can then be summarized into Algorithm 1. It is easy to see that this solving strategy for the ASPL model finely accords with the pipeline of our framework.

{\emph{Convergence Discussion}: As illustrated in Algorithm~\ref{alg:alg_overview}, the ASPL algorithm alternatively updates variables including: the classifier parameters $w$, $b$ (by weighted SVM), the pseudo-labels $y$ (closed-form solution by Theorem 1), the importance weight $v$ (by SPL), and low-confidence sample annotations $\phi$ (by AL). For the first three parameters, these updates are calculated by a global optimum obtained from a sub-problem of the original model, and thus the objective function can be guaranteed to be decreased. However, just as other existing AL techniques, human efforts are involved in the loop of the AL stage, and thus the objective function cannot be guaranteed to be monotonically decreased in this step. However, just as shows in Sect.~\ref{sec:experiment}, as the learning processing, the model tends to be more and more mature, and the labor of AL tends to be less and less in the later learning stage. Thus with gradually less involvement of the AL calculation in our algorithm, the monotonic decrease of the objective function in iteration tends to be promised, and thus our algorithm tends to be convergent.}

\subsection{Relationship with Other SPL/AL Models}

It is easy to see that the proposed ASPL model extends the previous AL/SPL models and includes all of them as special cases. When we fix the curriculum and feature representations and only update other parameters, it degenerates to the traditional SPL models by rationally setting the self-paced regularizer. When we fix the SPL parameters, feature representations and do not involve pseudo-labels in learning, the model degenerates the a general AL learning regime. The amelioration to both SPL and AL is expected to bring benefits to both regimes. On one hand, introducing more high-confidence samples in the self-paced fashion is helpful to reduce the burden of user annotations, particularly when the classifier becomes reliable at later learning iterations. On the other hand, the low confidence samples selected by active user annotations tends to make our approach workable with less initial labeled samples than existing self-paced learning algorithms. All these benefits are comprehensively substantiated by our experiments.

%
%



\begin{table}[!ht]
\begin{center}
\caption{The summarization of datasets we used.}
\label{table:dataset_setting}
\begin{tabular}{cccc} \hline
		Dataset & \# images & \# persons & \# images/person \\
	\hline
		CACD    & 56,138    & 500        &  	79$\sim$306	\\
CASIA-WebFace-Sub & 181,901 & 925 & 100$\sim$804 \\
	\hline
\end{tabular}
\end{center}
\vspace{-10pt}
\end{table}
\section{Experiments}
\label{sec:experiment}

In this section, we first introduce the datasets and implementation setting, and then discuss the experimental results and comparisons with other existing approaches.

\subsection{Datasets and Setting}
\label{ssec:datasets_settings}

We adopt two public datasets in our experiments, the Cross-Age Celebrity Dataset (CACD)~\cite{cross-age} { and CASIA-WebFace-Sub dataset~\cite{webface}}.

CACD is a large-scale and challenging dataset for evaluating face recognition and retrieval, and it contains a batch of images of $2,000$ celebrities collected from Internet, which are varying in age, pose, illumination, and occlusion. And only a subset of $200$ celebrities are manually annotated by Chen et al. \cite{cross-age}. For better convincing evaluation, we augment this subset by extra labeling $300$ individuals and obtain a set of $56,138$ images in total.

CASIA-WebFace dataset~\cite{webface} is a large scale face recognition dataset with 10,575 subjects/persons and 494,414 images. CASIA-WebFace is extremely challenging for its images are all collected from Internet with different view points and light illumination under different scenes. Though the total person/subject number of CASIA-WebFace dataset is very large, the sample number for each person, varying from 3 to 804, is heavily unbalanced. For those persons who has very few samples (say below 100), the experiment analysis is not able to be performed. Hence, we select a subset of the CASIA-WebFace dataset by discarding its persons with less than 100 samples to form the CASIA-WebFace-Sub dataset. The CASIA-WebFace-Sub dataset has 181,901 images with 925 persons inside. The detailed information of above mentioned datasets is summarized in Table~\ref{table:dataset_setting}.

\textbf{\textit{Experiment setting. }} We detect the facial points using the method proposed in \cite{SDM} and align the faces based on the eye locations. The experiments on both of the datasets are conducted as the following steps. We first randomly select $80\%$ images of each individual to form the unlabeled training set, and the rest samples are used for testing, according to the setting in the existing active learning method \cite{con_AL}. Then, we randomly annotate $n$ samples of each person in the training set to initialize the classifier. To get rid of the influence of randomness, we average the results over $5$ times of execution with different sample selections. All of the experiments are conducted on a common desktop PC with i7 3.4GHz CPU and a NVIDIA Titan X GPU.

On the two above mentioned datasets, we evaluate the performance of incremental face identification in two aspects: the recognition accuracy and user annotation amount in the incremental learning process. The recognition accuracy is defined as the rank-one rate for face identification.
We compare our ASPL framework with several existing active learning algorithms and baseline methods under the same setting: i) CPAL (Convex Programming based Active Learning) \cite{con_AL}: Annotate a few samples in each step based on prediction uncertainty and sample diversity; ii) CCAL (Confidence-based Active Learning via SVMs) \cite{tong2002support}: Select only one sample having lowest prediction confidence; iii) AL\_RAND: Randomly select unlabeled samples to be annotated during the training phase. This method discards all active learning techniques and can be considered as the lower bound, and iv) AL\_ALL: All unlabeled samples are annotated for training the classifier. This method can be regarded as the upper bound (best performance the classifier can achieve). {For fair comparison, all of these methods utilize the same feature representation as ours in the beginning. As the training iteration increase, active user annotation is employed to those selected most informative and representative samples. Then, CNN fine-tuning is also exploited to improve the feature extractor for ASPL, CPAL, CCAL, AL\_RAND, AL\_ALL.}

\textbf{\textit{Details of CNN implementation. }} The architecture of AlexNet~\cite{alexnet} is utilized in our all experiments. Thanks to the well pre-training, the CNN updating is only implemented few times during ASPL iteration in all our experiments, each only containing no more than 5 CNN updating steps. We generally conducted CNN steps after around 5 rounds of the SPL and AL updating, and the learning rate is set as 0.001 for all layers. Equal importance is imposed between the previous training examples and the newly labeled examples, and CNN is updated using the stochastic gradient decent methods with the momentum 0.9 and weight decay 0.0005.


\begin{figure*}[!htb]
\centering
\centering
\includegraphics[width=0.7\textwidth]{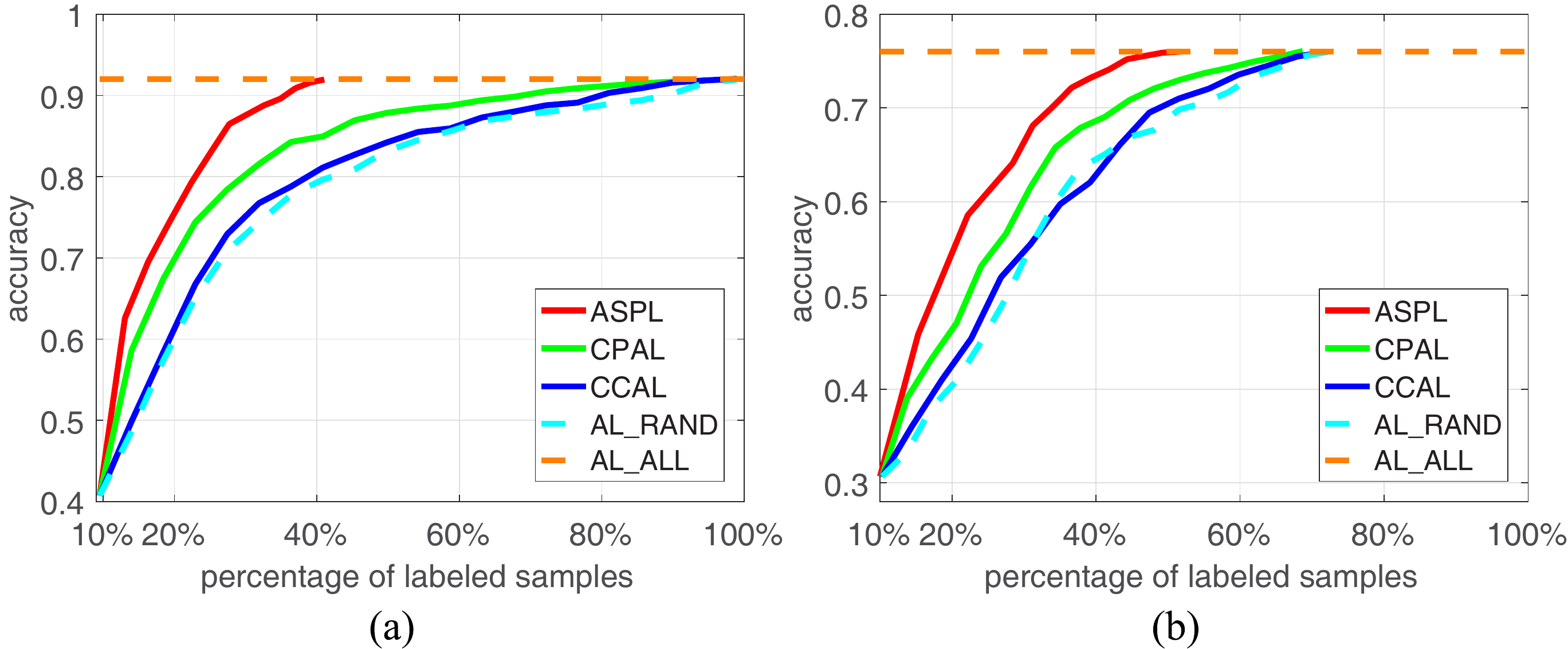}
\vspace{-5pt}
\label{fig:component_al_ai}
\caption{Results on (a) CACD and (b) CASIA-WebFace-Sub datasets. The vertical axes represent the recognition accuracy and the horizontal axes represent the percentage of annotated samples of the whole set.}
\vspace{-10pt}
\label{fig:cmp_alg_all}
\end{figure*}

\begin{figure}[ht]
\centering
\includegraphics[width=0.9\columnwidth]{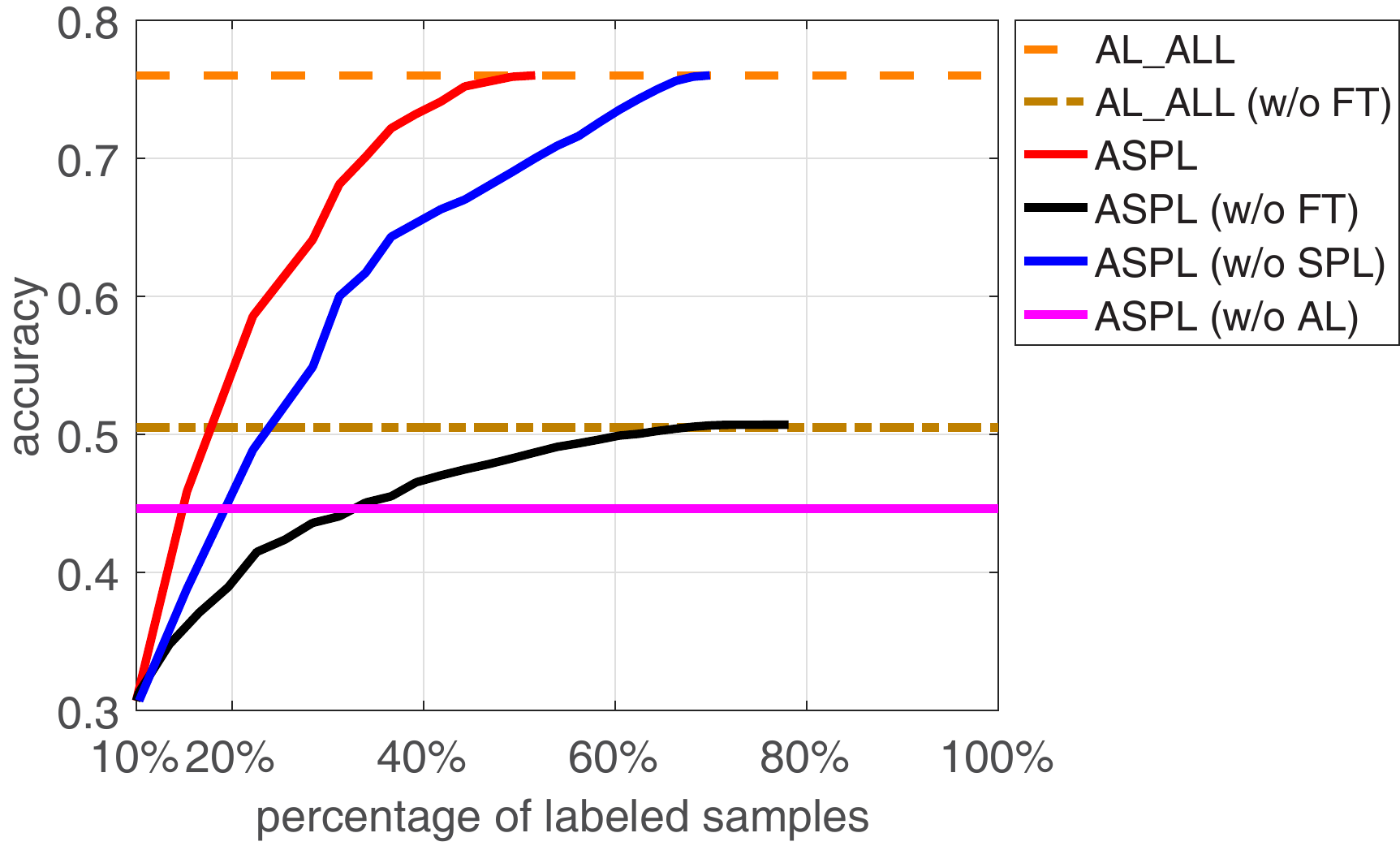}
\caption{Accuracies with the increase of annotated samples of different variants of our framework, using CASIA-Webface-Sub dataset. }
\vspace{-10pt}
\label{fig:component_al}
\end{figure}

\subsection{Experimental Comparisons}
{
The results on the two datasets are reported in Fig.~\ref{fig:cmp_alg_all}(a) and Fig.~\ref{fig:cmp_alg_all}(b), respectively, where we can observe how the recognition accuracy changes with increasingly incorporating more unlabeled samples. In CACD dataset, to achieve the same recognition accuracy, ASPL model requires few annotation of the unlabeled data. On the other hand, ASPL outperforms the competing methods in accuracy when the same amount annotations. ASPL can still have a superior performance as the iteration goes on. The similar results and phenomena can be discovered in CASIA-WebFace-Sub dataset. As one can see that, ASPL only requires about 40\% and 45\% annotations to achieve the-state-of-art performance on CACD and CASIA-WebFace-Sub dataset, respectively. While the compared methods AL\_RAND, CCAL and CPAL all requires about 81\% and 65\%, respectively. Hence, our ASPL can performs as well as the AL\_ALL with minimal annotations.

Note that the performances of RAND and CCAL are relatively close, and the similar results were reported in \cite{con_AL}. According to the explanation in \cite{con_AL}, this comes from the fact that many samples have low prediction confidences and distribute not densely in the feature space. Thus, the randomizing sample selection achieves similar results compared to CCAL. }

\subsection{Component Analysis}
\label{ssec:exper_ca}
{
To further analyze how different components contribute to performance, we implement several variants of our framework: i) ASPL (w/o FT): allowing both active and self-paced sample selection during learning while disabling the CNN fine-tuning, i.e., the feature extractor is kept the same as the iteration goes on for training; ii) ASPL (w/o SPL): discarding high-confidence sample pseudo-labeling via self-paced learning; iii) ASPL (w/o AL): ignoring low confidence samples for active user annotation; iv) AL\_ALL: fine-tuning the CNN and train classifiers with all the labels of the training samples and v) AL\_ALL (w/o FT): training classifiers with all the labels of the training samples without fine-tuning. Moreover, the full version of our proposed model is denoted as ASPL, which allows the convolutional nets to be fine-tuned during the training process. We further evaluate the ASPL variants in the following aspects.

\textbf{\textit{Contribution of different ASPL components.}}
Using AL\_ALL and AL\_ALL (w/o FT) as the baselines, we gradually add the AL, SPL and fine-tuning components to ASPL. These experiments are executed on the CASIA-Webface dataset. Fig. \ref{fig:component_al} illustrates the accuracy obtained using ASPL, ASPL (w/o FT), ASPL (w/o AL) and ASPL (w/o SPL). One can observe that any of the three components is useful in improving the recognition accuracy. Especially, the additional SPL component can significantly improve the recognition accuracy and reduce the number of annotation samples by automatically exploiting the majority of high-confidence samples for feature learning.

We also observe that the CNN feature fine-tuning can dramatically improve the recognition accuracy in the early steps. This is mainly because the information gain (i.e., individual appearance diversity) deceases with progressively introducing new samples to the neural nets. 

\textbf{\textit{Analysis on initial samples. }} In SPL \cite{spl_kumar}, classifier is first trained using the initial samples. With the current classifier, easy samples are preferred to be selected in the early training steps, and thus it is expected that the performance of SPL heavily relies on the initial samples. Fortunately, by incorporating with active learning, ASPL can evidently alleviate this problem. To verify this, we compare the performance of ASPL and SPL on 20 randomly selected individuals of CASIA-Webface-Sub dataset. The result is shown in Fig.~\ref{fig:diff_init}. Given the same initialized feature representations, we also conduct the experiments to analyze the performance vs different initial portions to be handled by AL on this dataset. The results are illustrated in Fig.~\ref{fig:init_portion}.

As one can see from Fig.~\ref{fig:diff_init}, with different initial samples, ASPL reaches similar/stable results as the training continues, while SPL still varies a lot. This result indicates that the AL component is effective in handling the  poor initialization. Fig.~\ref{fig:init_portion} illustrates that though poor performance is obtained at the beginning, the performance of our model increases during the training process. In summary, our model is insensitive to the diversity and quantity of initial samples.

\begin{figure}[!htb]
\begin{center}
\includegraphics[width=0.65\columnwidth]{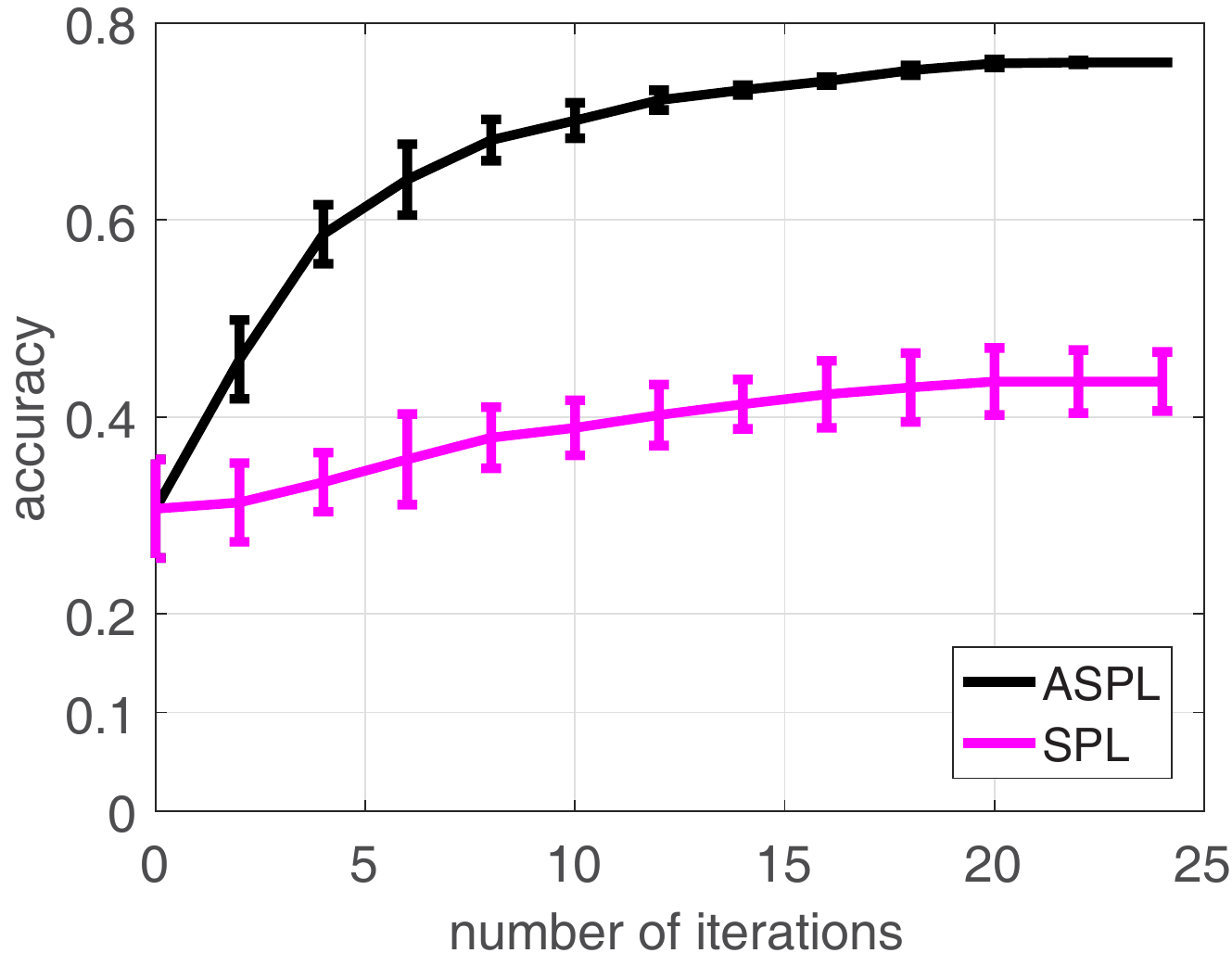}
\vspace{-5pt}
\caption{The accuracy and standard deviation of ASPL and SPL on the CASIA-Webface-Sub dataset.}
\vspace{-10pt}
\label{fig:diff_init}
\end{center}
\end{figure}

\begin{figure}[!htb]
\begin{center}
\includegraphics[width=0.65 \columnwidth]{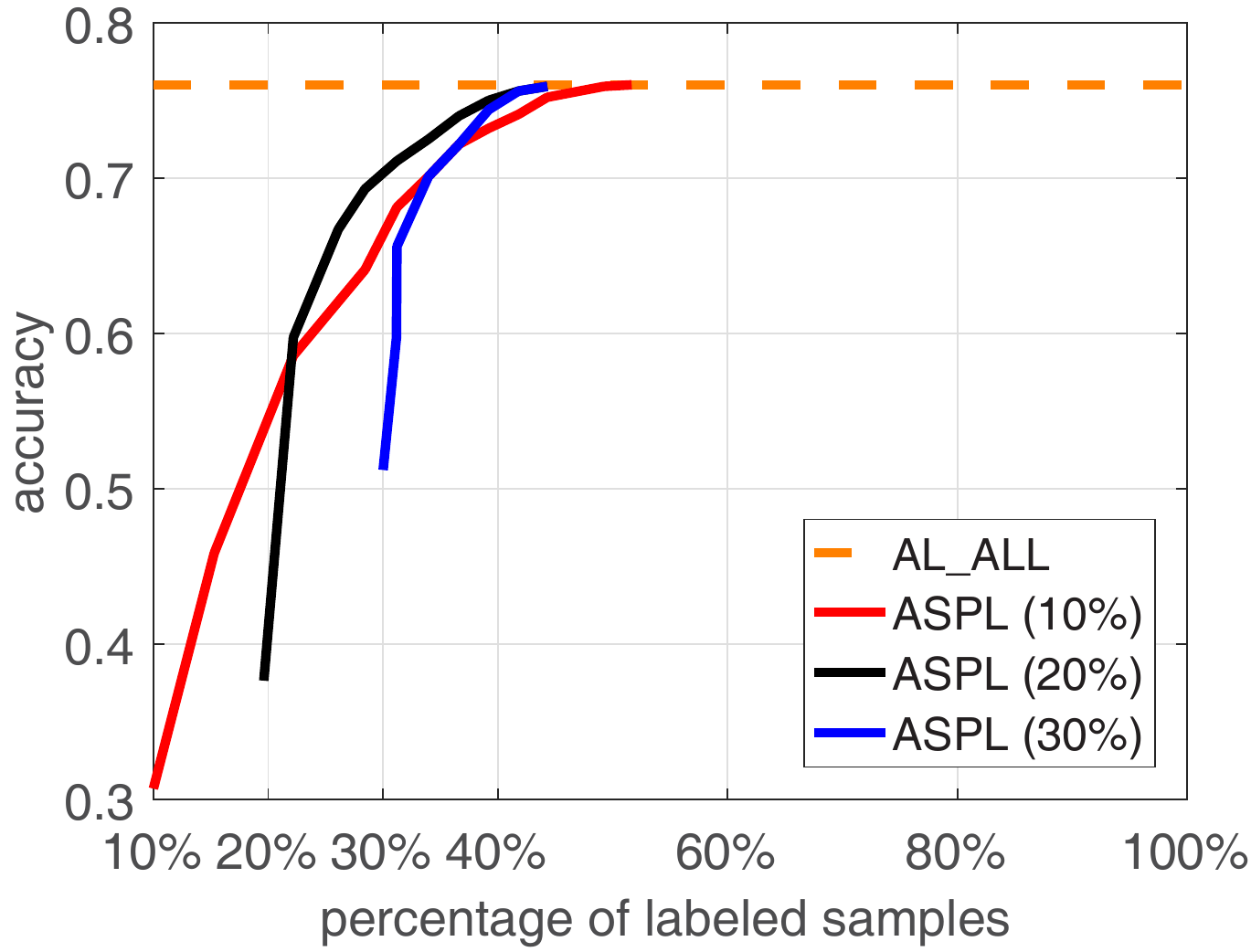}
\vspace{-5pt}
\caption{The comparison of different number of initial samples and the further required annotation ported of the AL process on the CASIA-Webface-Sub dataset. For fair comparison, these methods share the same feature representation as initialization.}
\vspace{-10pt}
\label{fig:init_portion}
\end{center}
\end{figure}

\begin{table}[!ht]
\begin{center}
\caption{The performance comparison of whether handling unseen new classes or not on the CASIA-WebFace-Sub dataset. ASPL (ALL) denotes the ASPL version of no unseen classes.}
\label{table:newclass}
\begin{tabular}{ccccc} \hline
		\# Class Number  & 300 & 600 & 925 &	\\
	\hline
		ASPL (ALL)  &  88.3\%  &  81.0\%  &  76.0\%  \\
		ASPL  &  88.3\%  & 81.6\%  &  76.0\%  \\
	\hline
\end{tabular}
\end{center}
\vspace{-15pt}
\end{table}

\begin{table}[htpb]
\begin{center}
\caption{The error rates of the pseudo-labels assigned by SPL on high-confidence samples.}
\label{table:error_rate}
\begin{tabular}{ c c c c c c } \hline
		\# iteration 	 	  & 5 	& 10 	& 	15 & 20	  & 25 	\\
	\hline
		ASPL (w/o FT)   &  8.2\%  &  6.9\%  &  5.1\% & 5.0\% & 4.9\% \\
		ASPL             &  4.5\%  &  4.1\%  &  3.4\% & 3.3\% & 3.3\%  \\
	\hline
\end{tabular}
\end{center}
\end{table}

\begin{figure}[htpb]
\centering
	\centering
	\includegraphics[width=0.63\columnwidth]{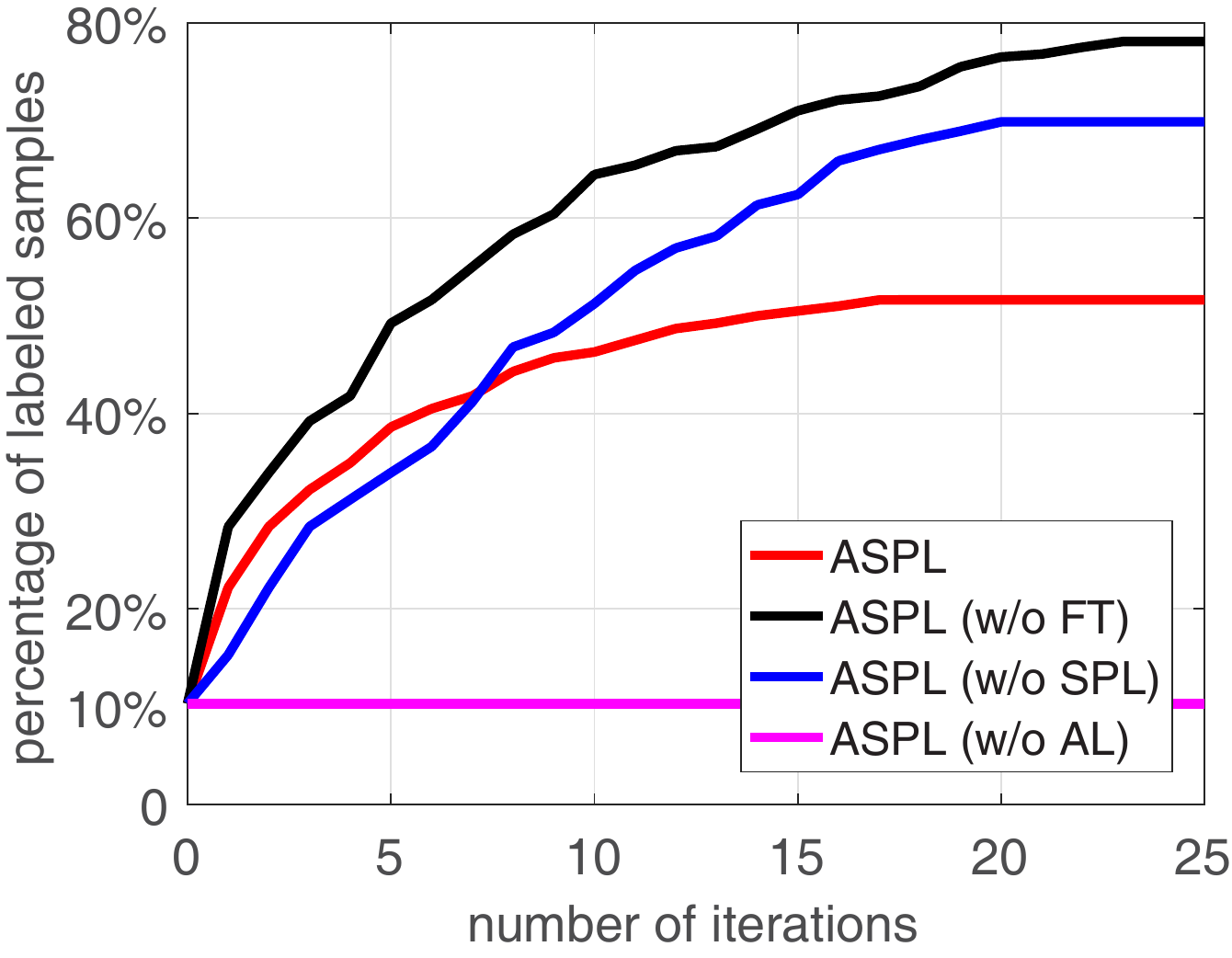}
	\vspace{-5pt}
\caption{The comparison of different number of initial samples and the further required annotation ported of the AL process on the CASIA-WebFace-Sub 
dataset.}
\vspace{-15pt}
\label{fig:AL_needs}
\end{figure}

\textbf{\textit{Performance with new classes.}}
To justify the effectiveness of our ASPL for handling unseen new classes, we conduct the following experiment on the CASIA-WebFace-Sub dataset: We compare the performance of incrementally giving some classes (our ASPL) and directly giving all person classes. Specifically, given all person classes,  we initialize all the classifiers at the beginning of the training and optimize them without handling unseen new classes. We denote this variant as ASPL (ALL). The experimental result is illustrated in Table~\ref{table:newclass} and shows that our proposed ASPL can handle unseen new classes effectively without substantially performance drop or even with slightly better performance, compared with the all classes given version ASPL (ALL).

\textbf{\textit{Annotation required for large scale dataset.}}
To demonstrate that our ASPL can be adopted under large scale scenario, we analyze the training phase of ASPL on the large scale CASIA-WebFace-Sub dataset. As illustrated in Fig.~\ref{fig:AL_needs}, the x-axis denotes the number of training iterations and the y-axis denotes the amount of required user annotation. The curve in Fig.~\ref{fig:AL_needs} demonstrates that our proposed ASPL model requires relatively larger annotations when the training iteration number is small. As the training continues, the amount required annotations began to be reduced due to the gradually mature model incrementally ameliorated in the learning process. This observation indicates that the burden of user annotations would be indeed relieved when the classifier becomes reliable at the later learning stage of the proposed ASPL method. Moreover, as illustrated in Table \ref{table:error_rate}, with the increase of user annotations over time, ASPL can automatically assign more reliable pseudo-labels to the unlabeled samples selected in the self-paced way.

\begin{figure*}[!htb]
\begin{center}
\includegraphics[width=0.65\textwidth]{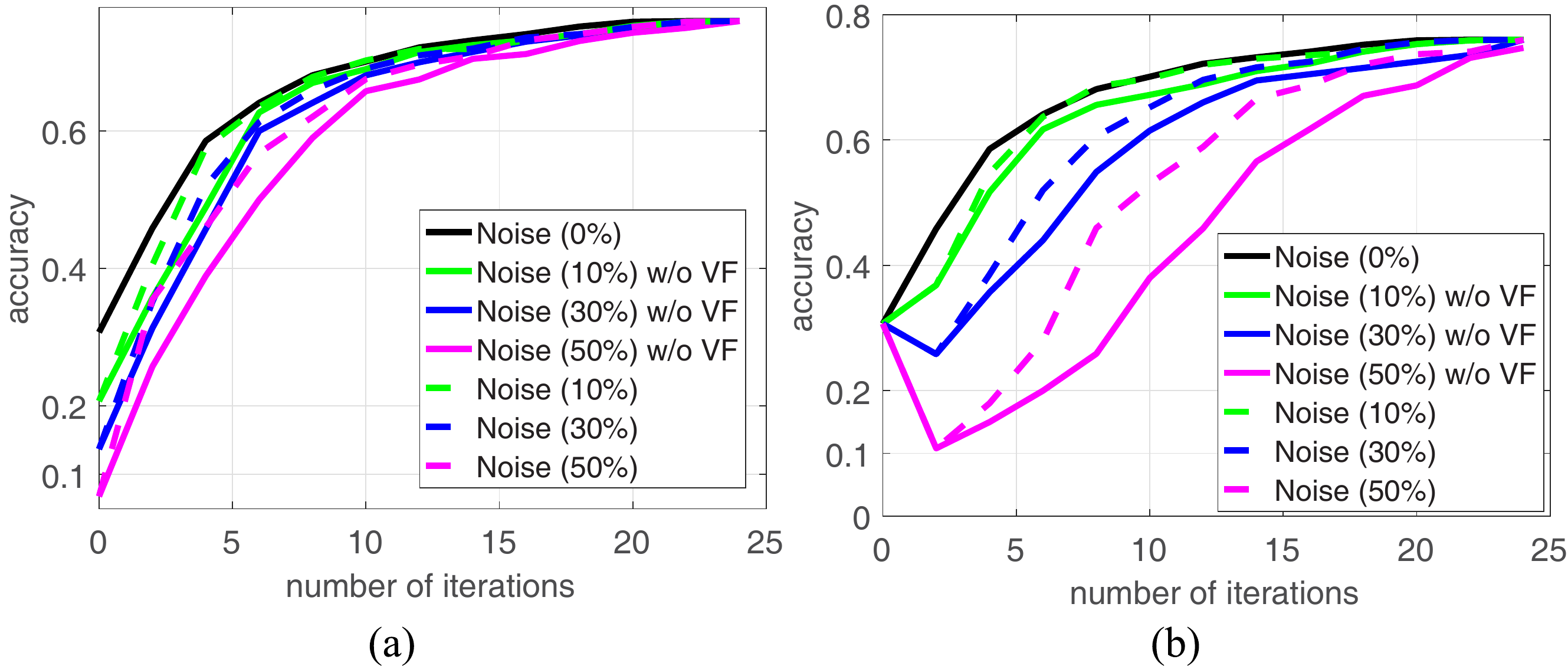}
\vspace{-5pt}
\caption{Robust analysis of ASPL under two types of noisy samples. (a) Using different number of noisy samples as the initial annotation. (b) Adding different number of noisy samples at the 10-th step (denoted by the black spots).}
\vspace{-20pt}
\label{fig:robust}
\end{center}
\end{figure*}

\textbf{\textit{Robustness analysis. }}
{ We further analyze the robustness of ASPL when noisy images are deliberately included in two experiments. (i) Ex-1: $a$ ($a = 0\%, 10\%, 30\%, 50\%$) noisy images are added to the initial samples for each individual. (ii) Ex-2: noise-free initials are used, but $b$ ($b = 0\%, 10\%, 30\%, 50\%$) importers are deliberately annotated during the training process. These experiments are conducted on the CASIA-Webface-Sub dataset. To validate the effectiveness of the proposed annotated sample verifying step, we disable the verifying step and denote these modification as ``Noise w/o VF''.

Fig. \ref{fig:robust}(a) shows the result of Ex-1, where ASPL is initialized with different number of noisy images. In early steps of the iteration, noisy data have huge adverse effect on test accuracy. Along with the increase of iteration number, the genuine data gradually dominate the results.
Fig. \ref{fig:robust}(b) illustrates the result of Ex-2, where noisy images are added to the labeled training set the 2-th step of iteration. We can see that a sharp decline in the recognition accuracy. However, with the evolving of ASPL training, similar accuracy as compared with that got on the original clean data can be obtained when the number of iterations increases. As one can comparing ``Noise (10/30/50\%)'' with ``Noise (10/30/50\%) w/o VF'' from Fig.~\ref{fig:robust}(a), with the verifying step, ASPL can recover from noisy images in a slightly fast way. This justifies the effectiveness of the proposed annotated sample verifying step.
}

\section{Conclusions}
\label{sec:conclusion}

In this paper, we have introduced, first, an effective framework to solve incremental face identification, which build classifiers by progressively annotating and selecting unlabeled samples in an active self-paced way, and second, a theoretical interpretation of the proposed framework pipeline from the perspective of optimization. Third, we evaluate our approach on challenging scenarios and show very promising results.

In the future, we will extend the system to support several video-based vision applications, which require large amount of user annotations. The proposed framework provides a rational realization to this task by automatically distinguishing high-confidence samples, which can be easily and faithfully recognized by computers in a self-paced way, and low-confidence ones, which can be discovered by requesting user annotation.

\appendix
\textbf{Proof for Theorem 1}

{
Our aim is to solve the following optimization problem:
\begin{equation}
\label{eq:y}
\min_{\mathbf{y}_{i}\in \{-1,1\}^{m},i\in \mathcal{S}}\sum_{j=1}^{m}v_{i}^{(j)}\ell _{ij},\ \ \text{s.t.}~\sum_{j=1}^{m}|y_{i}^{(j)}+1|\leq 2,
\end{equation}
where $\ell_{ij}=l\left( \mathbf{w}^{(j)},b^{(j)};\mathbf{x}_{i},y_{i}^{(j)}\right) $ is the hinge loss of $\mathbf{x}_{i}$ in the $j$th classifier. Specifically, we define the hinge loss as:
\begin{equation*}
l\left( \mathbf{w}^{(j)},b^{(j)};\mathbf{x}_{i},y_{i}^{(j)}\right) =\left(
1-y_{i}^{(j)}\left( \mathbf{w}^{^{(j)}T}\mathbf{x}_{i}+b^{(j)}\right)
\right) _{+}.
\end{equation*}
The constraint term
\begin{equation}
\sum_{j=1}^{m}|y_{i}^{(j)}+1|\leq 2 \label{eq:indicator}
\end{equation}
dominates two cases of $\mathbf{y}_{i}$ can be for all $m$ classifiers: (i) all items of $\mathbf{y}_{i}$ are \emph{all negative}, i.e., $\{y_{i}^{(j)}\}_{j=1}^{m}=\{-1\}$. In this case, the input region proposal $\mathbf{x}_{i}$ is assumed to be the background by $m$ classifiers in the current optimization. (ii) In all items of $\mathbf{y}_{i}$, \emph{one is positive and all others are negative}. In this case, $\mathbf{x}_{i}$ is categorized into a certain object class.

\vspace{15pt}

Before giving the solution of Eqn.~(\ref{eq:y}), we first introduce the two necessary lemmas as follows:

\textbf{Lemma 1}: The solution of
\begin{equation}
\min_{y_{i}^{(j)}\in \{-1,1\},i\in \mathcal{S}}\text{ }\ell _{ij},~j=1,...,m
\label{eq: 1}
\end{equation}

is:
\begin{equation*}
y_{i}^{(j)}=\left\{
\begin{array}{c}
-1,\text{ if }\mathbf{w}^{(j)T}\mathbf{x}_{i}+b^{(j)}<0 \\
1,\text{ if }\mathbf{w}^{(j)T}\mathbf{x}_{i}+b^{(j)}>0 \\
1\text{ or }-1,\text{ if }\mathbf{w}^{(j)T}\mathbf{x}_{i}+b^{(j)}=0%
\end{array}%
\right. .
\end{equation*}

\begin{proof} We discuss the solution in three cases:

(i) When $\mathbf{w}^{(j)T}\mathbf{x}_{i}+b^{(j)}<0$, it is easy to see that
\begin{equation*}
\left( 1-\left( \mathbf{w}^{^{(j)}T}\mathbf{x}_{i}+b^{(j)}\right) \right)
_{+}>\left( 1+\left( \mathbf{w}^{^{(j)}T}\mathbf{x}_{i}+b^{(j)}\right)
\right) _{+}.
\end{equation*}

Thus the global solution of Eqn.~(\ref{eq: 1}) is $y_{i}^{(j)}=-1$.

(ii) When $\mathbf{w}^{(j)T}\mathbf{x}_{i}+b^{(j)}>0$, similar to (i), one can easily prove that $y_{i}^{(j)}=1$ is the global solution in this case.

(iii) When $\mathbf{w}^{(j)T}\mathbf{x}_{i}+b^{(j)}=0$, whether $y_{i}^{(j)}$ = $1$ or $-1$, $l_{ij}$ will have the same value $1$. Thus both $y_{i}^{(j)}=1$ and $y_{i}^{(j)}=-1$ are the global solution of Eqn.~(\ref{eq: 1}).
\end{proof}

\textbf{Lemma 2}: The solution of
\begin{equation}
\min_{\mathbf{y}_{i}\in \{-1,1\}^{m},i\in \mathcal{S}}v_{i}^{(j)}\ell _{ij},~ j = 1, ..., m,
\label{eq: 2}
\end{equation}

is:
\begin{equation*}
y_{i}^{(j)}=\left\{
\begin{array}{c}
-1,\text{ if }\mathbf{w}^{(j)T}\mathbf{x}_{i}+b^{(j)}<0\text{ and }v_{i}^{(j)}\in (0,1] \\
~~~1,\text{ if }\mathbf{w}^{(j)T}\mathbf{x}_{i}+b^{(j)}>0\text{ and }v_{i}^{(j)}\in (0,1] \\
1\text{ or }-1,\text{ if }\mathbf{w}^{(j)T}\mathbf{x}_{i}+b^{(j)}=0\text{
or }v_{i}^{(j)}=0
\end{array}
.\right.
\end{equation*}

\begin{proof}
For $v_{i}^{(j)}\in (0,1]$, since $v_{i}^{(j)}$ is a positive constant, the solution of Eqn.~(\ref{eq: 2}) is the same as that of Eqn.~(\ref{eq: 1}). While if $v_{i}^{(j)}=0$, for both $y_{i}^{(j)}=1$ or $y_{i}^{(j)}=-1$,~~$l_{ij}$ will have the same value $0$. The conclusion is thus evident.
\end{proof}

As one can easily see from \textbf{Lemma 2}, when $\mathbf{w}^{(j)T}\mathbf{x}_{i}+b^{(j)}=0$ or $v_{i}^{(j)}=0$, the optimal $y_{i}^{(j)}$ for Eqn.~(\ref{eq: 2}) can be either $+1$ or $-1$. Thus in all components $v_{i}^{(j)}\ell_{ij}$ of Eqn.~(\ref{eq:y}) with $\mathbf{w}^{(j)T}\mathbf{x}_{i}+b^{(j)}=0$ or $v_{i}^{(j)}=0$, we can easily assume that the corresponding solution is $y_{i}^{(j)}=-1$, i.e., $|y_{i}^{(j)}+1|=0$, which will not affect the soundness and final values of the optimal solution of Eqn.~(\ref{eq:y}). 

Denote those $j$s that satisfy $\mathbf{w}^{(j)T}\mathbf{x}_{i}+b^{(j)}\neq 0$ and $v_{i}^{(j)}\in (0,1]$ as a set $M$ and set all $y_{i}^{(j)}=-1$ for others in default. The solution of Eqn.~(\ref{eq:y}) for $y_{i}^{(j)},~j\in M$ can be obtained by the following theorem.

\begin{theorem}
\text{}

(a) If ~~$\forall j \in M$,~~$\mathbf{w}^{(j)T}\mathbf{x}_{i}+b^{(j)}<0$,  Eqn.~(\ref{eq:y}) has a solution:
\begin{equation*}
y_{i}^{(j)}={-1},~~~j=1,...,m;
\end{equation*}

(b) When ~~~$\forall j \in M$~except $j=j^{\ast }$, $\mathbf{w}^{(j)T}\mathbf{x}_{i}+b^{(j)}<0$, i.e., $v_{i}^{(j^*)}\ell_{ij^*}>0$, then Eqn.~(\ref{eq:y}) has a solution:
\begin{equation*}
y_{i}^{(j)}=\left\{
\begin{array}{c}
-1,\text{ }j\neq j^{\ast } \\
~~~1,\text{ }j=j^{\ast }
\end{array}
;\right.
\end{equation*}

(c) Otherwise, Eqn.~(\ref{eq:y}) has a solution:
\begin{equation*}
y_{i}^{(j)}=\left\{
\begin{array}{c}
-1,\text{ }j\neq j^{\ast } \\
~~~1,\text{ }j=j^{\ast }
\end{array}
,\right.
\end{equation*}
where
\begin{equation}
j^{\ast }=\arg \underset{1\leq j\leq m}{\min }v_{i}^{(j)}\left( \ell_{ij} -\left( 1+\left( \mathbf{w}^{^{(j)}T}\mathbf{x}_{i}+b^{(j)}\right)
\right) _{+}\right).  \label{eq: 3}
\end{equation}
\end{theorem}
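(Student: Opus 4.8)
The plan is to exploit the extremely restrictive combinatorial structure that the constraint $\sum_{j=1}^m |y_i^{(j)}+1|\le 2$ imposes on a single label vector $\mathbf{y}_i$: it forces $\mathbf{y}_i$ either to be the all-negative vector $(-1,\dots,-1)$ or to have exactly one coordinate equal to $+1$ with all the others equal to $-1$. So for each fixed $i$ the feasible set of (\ref{eq:y}) consists of only $m+1$ points, and the problem reduces to comparing $m+1$ objective values. First I would use Lemma~2 to remove the ``irrelevant'' coordinates: for $j\notin M$ (that is, $\mathbf{w}^{(j)T}\mathbf{x}_i+b^{(j)}=0$ or $v_i^{(j)}=0$) the term $v_i^{(j)}\ell_{ij}$ has the same value for $y_i^{(j)}=1$ and $y_i^{(j)}=-1$, so without loss of generality we may fix $y_i^{(j)}=-1$ there; this alters neither feasibility nor the optimal value, and it confines the search for the single possible $+1$ coordinate to $j\in M$.

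Next I would set up the comparison quantitatively. Take the all-negative vector as a baseline with cost $B=\sum_{j=1}^m v_i^{(j)}\bigl(1+(\mathbf{w}^{(j)T}\mathbf{x}_i+b^{(j)})\bigr)_+$, and for each $j$ define the marginal cost of flipping coordinate $j$ to $+1$ as $\Delta_j=v_i^{(j)}\bigl[\bigl(1-(\mathbf{w}^{(j)T}\mathbf{x}_i+b^{(j)})\bigr)_+-\bigl(1+(\mathbf{w}^{(j)T}\mathbf{x}_i+b^{(j)})\bigr)_+\bigr]$; reading $\ell_{ij}$ as the positive-label hinge loss, $\Delta_j$ equals $v_i^{(j)}\bigl(\ell_{ij}-(1+(\mathbf{w}^{(j)T}\mathbf{x}_i+b^{(j)}))_+\bigr)$, so the index in (\ref{eq: 3}) is precisely $j^*=\arg\min_j\Delta_j$. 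The one-positive candidate with positive coordinate $j$ has objective $B+\Delta_j$, hence the global minimum of (\ref{eq:y}) equals $B+\min\{0,\min_j\Delta_j\}$: keep all-negative if $\min_j\Delta_j\ge 0$, otherwise flip the minimizing coordinate.

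The remaining ingredient is a one-variable sign analysis of $g(s)=(1-s)_+-(1+s)_+$: splitting on $s\le -1$, $-1\le s\le 1$, $s\ge 1$ gives $g(s)>0\iff s<0$, $g(s)=0\iff s=0$, $g(s)<0\iff s>0$. Since $v_i^{(j)}>0$ and $s_j\ne 0$ for $j\in M$, while $\Delta_j=0$ for $j\notin M$, this yields the trichotomy: if every $j\in M$ has $\mathbf{w}^{(j)T}\mathbf{x}_i+b^{(j)}<0$ then $\min_j\Delta_j\ge 0$, the baseline is optimal, and we are in case~(a); otherwise some $j\in M$ has $\mathbf{w}^{(j)T}\mathbf{x}_i+b^{(j)}>0$, so $\min_j\Delta_j<0$ is attained at such a $j^*\in M$, flipping it beats both the baseline and flipping any $j\notin M$, and the optimum has $y_i^{(j^*)}=1$ with all other $y_i^{(j)}=-1$; this is case~(c), of which case~(b) is the special instance where $M$ has exactly one coordinate of positive margin (and it satisfies $v_i^{(j^*)}\ell_{ij^*}>0$).

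I expect the main obstacle to be careful bookkeeping of the degenerate and boundary situations rather than any hard inequality: handling $M=\emptyset$ (case~(a) holds vacuously); confirming that relabeling a coordinate with $\mathbf{w}^{(j)T}\mathbf{x}_i+b^{(j)}=0$ or $v_i^{(j)}=0$ truly leaves the objective unchanged, so that ``has a solution'' (optimal among possibly several) is legitimate in each branch; and checking that the $\arg\min$ in (\ref{eq: 3}), taken over all $1\le j\le m$, necessarily falls inside $M$ whenever that minimum is negative, so that the $j^*$ produced by the formula coincides with the natural choice described in case~(b). Each of these is routine once Lemmas~1--2 are in hand, but they need to be spelled out so that every branch of the statement is covered.
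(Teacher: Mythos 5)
Your proposal is correct and follows essentially the same route as the paper's proof: Lemmas~1--2 dispose of the coordinates outside $M$, the constraint reduces the feasible set to the all-negative vector plus the $m$ one-positive vectors, and your marginal cost $\Delta_j$ is exactly the quantity $F(j')-F(j^*)$ the paper compares, with your sign analysis of $(1-s)_+-(1+s)_+$ being the content of Lemma~1. The only (cosmetic) difference is that you treat all three cases uniformly via $B+\min\{0,\min_j\Delta_j\}$, whereas the paper handles (a) and (b) by observing that the unconstrained per-coordinate optimum is already feasible.
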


\begin{proof} 
In the cases $(a)$ and $(b)$, it is easy to see that the provided $y_{i}^{(j)} $ is actually the solution of the unconstraint problem of Eqn.~(\ref{eq:y}). Since the solution complies with the constraint, this solution is also the one of the constrained one.

In the case $(c)$, there are more than two samples with positive confidence scores, i.e., $\mathbf{w}^{(j)T}\mathbf{x}_{i}+b^{(j)}>0$. In this case, it is impossible that the final solution is
\begin{equation*}
y_{i}^{(j)}=-1\text{, }j=1,...,m,
\end{equation*}since if we let $y_{i}^{(j)}=1$ for any one sample satisfying $\mathbf{w}^{(j)T}\mathbf{x}_{i}+b^{(j)}>0$, the objective function will have a decrease value with respect to $v_{i}^{(j)} > 0$. 
\begin{equation*}
v_{i}^{(j)}\left( \left( 1+\left( \mathbf{w}^{^{(j)}T}\mathbf{x}
_{i}+b^{(j)}\right) \right) _{+} -\ell_{ij}\right) >0.
\end{equation*}

Then there will be a unique $j^{\ast }$ where the final solution should have $y_{i}^{(j^{\ast })}=1$. We only need to pick up the one at which the objective of Eqn.~(\ref{eq:y}) attains the minimal value.

Assume $\mathbf{y}_{i}^{'}\in\{-1,1\}^{m}$ with
\begin{equation*}
y_{i}^{(j)}=\left\{
\begin{array}{c}
-1,\text{ }j\neq j^{\prime } \\
~~~1,\text{ }j=j^{\prime }
\end{array}
~~~.\right.
\end{equation*}
The objective of Eqn.~(\ref{eq:y}) is then
\begin{equation*}
\begin{aligned}
&F(j^{\prime })=\sum_{j=1}^{m}v_{i}^{(j)}\left( 1-y_{i}^{(j)}\left( \mathbf{w}^{^{(j)}T}\mathbf{x}_{i}+b^{(j)}\right) \right) _{+} \\
&=v_{i}^{(j^{\prime
})}\ell_{ij^{\prime}}+\sum_{j\neq j^{\prime }}\left(
1+\left( \mathbf{w}^{^{(j)}T}\mathbf{x}_{i}+b^{(j)}\right) \right) _{+}.
\end{aligned}
\end{equation*}
Then if we assume another $\mathbf{y}_{i}^{\ast }\in\{-1,1\}^{m}$ with
\begin{equation*}
y_{i}^{(j)}=\left\{
\begin{array}{c}
-1,\text{ }j\neq j^{\ast } \\
~~~1,\text{ }j=j^{\ast }%
\end{array}%
~~~,\right.
\end{equation*}
then we have that
\begin{eqnarray*}
&&F(j^{\prime })-F(j^{\ast }) \\
&=&v_{i}^{(j^{\prime })}\ell_{ij^{\prime}}+v_{i}^{(j^{\ast
})}\left( 1+\left( \mathbf{w}^{^{(j^{\ast })}T}\mathbf{x}_{i}+b^{(j^{\ast
})}\right) \right)_+  \\
&&-v_{i}^{(j^{\prime })}\left( 1+\left( \mathbf{w}^{^{(j^{\prime })}T}\mathbf{x}_{i}+b^{(j^{\prime })}\right) \right) _{+} - v_{i}^{(j^{\ast
})}\ell_{ij^{\ast}} \\
&=&\left(v_{i}^{(j^{\prime })}\ell_{ij^{\prime}}-v_{i}^{(j^{\prime
})}\left( 1+\left( \mathbf{w}^{^{(j^{\prime})}T}\mathbf{x}_{i}+b^{(j^{\prime
})}\right) \right)_+ \right)  \\
&&-\left( v_{i}^{(j^{\ast })}\ell_{ij^{\ast}}-v_{i}^{(j^{\ast
})}\left( 1+\left( \mathbf{w}^{^{(j^{\ast })}T}\mathbf{x}_{i}+b^{(j^{\ast
})}\right) \right)_{+} \right) \\
&=&v_{i}^{(j^{\prime })}\left(\ell_{ij^{\prime}}-\left( 1+\left( \mathbf{w}^{^{(j^{\prime})}T}\mathbf{x}_{i}+b^{(j^{\prime
})}\right) \right)_+ \right)  \\
&&-v_{i}^{(j^{\ast })}\left( \ell_{ij^{\ast}}-\left( 1+\left( \mathbf{w}^{^{(j^{\ast })}T}\mathbf{x}_{i}+b^{(j^{\ast
})}\right) \right)_{+} \right) \\
\end{eqnarray*}
If we choose $j^{\ast }$ as Eqn.~(\ref{eq: 3}), then it is easy to see that
\begin{equation*}
F(j^{\prime })-F(j^{\ast })\geq 0.
\end{equation*}
We thus can deduce that Eqn.~(\ref{eq:y}) is with a global solution:
\begin{equation*}
y_{i}^{(j)}=\left\{
\begin{array}{c}
-1,\text{ }j\neq j^{\ast } \\
~~~1,\text{ }j=j^{\ast }
\end{array}
~~~.\right.
\end{equation*}
The proof is completed.
\end{proof}
}


\bibliographystyle{ieee}
{\small
\bibliography{mybib}
}

\vspace{-10pt}

\begin{IEEEbiography}[{\includegraphics[width=1in,height=1.25in,clip,keepaspectratio]{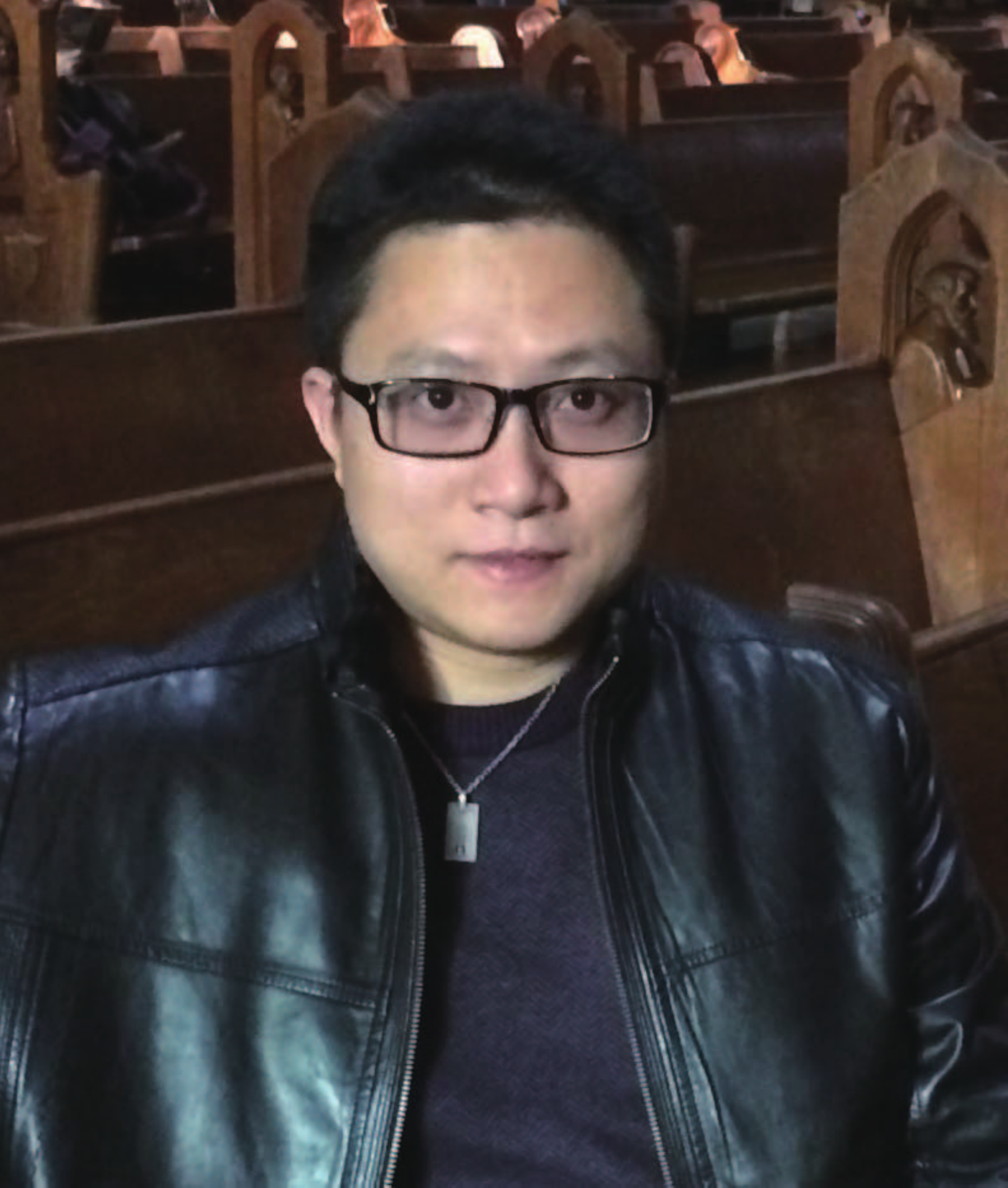}}]{Liang Lin} is a full Professor of Sun Yat-sen University. He is the Excellent Young Scientist of the National Natural Science Foundation of China. He received his B.S. and Ph.D. degrees from the Beijing Institute of Technology (BIT), Beijing, China, in 2003 and 2008, respectively, and was a joint Ph.D. student with the Department of Statistics, University of California, Los Angeles (UCLA). From 2008 to 2010, he was a Post-Doctoral Fellow at UCLA. From 2014 to 2015, as a senior visiting scholar he was with The Hong Kong Polytechnic University and The Chinese University of Hong Kong. His research interests include Computer Vision, Data Analysis and Mining, and Intelligent Robotic Systems, etc. Dr. Lin has authorized and co-authorized on more than 100 papers in top-tier academic journals and conferences. He has been serving as an associate editor of IEEE Trans. Human-Machine Systems. He was the recipient of the Best Paper Runners-Up Award in ACM NPAR 2010, Google Faculty Award in 2012, Best Student Paper Award in IEEE ICME 2014, and Hong Kong Scholars Award in 2014. More information can be found in his group website http://hcp.sysu.edu.cn
\end{IEEEbiography}

\begin{IEEEbiography}[{\includegraphics[width=1in,height=1.25in,clip,keepaspectratio]{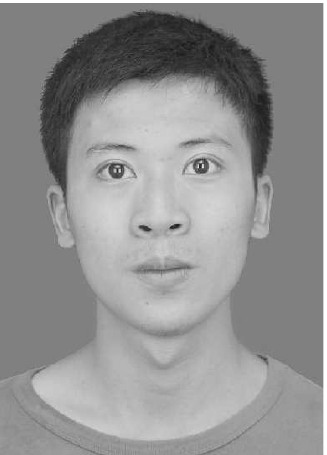}}]{Keze Wang} received his B.S. degree in software engineering from Sun Yat-Sen University, Guangzhou, China, in 2012. He is currently pursuing the Ph.D. degree in computer science and technology at Sun Yat-Sen University, advised by Professor Liang Lin. His current research interests include computer vision and machine learning.
\end{IEEEbiography}
\begin{IEEEbiography}[{\includegraphics[width=1in,height=1.25in,clip,keepaspectratio]{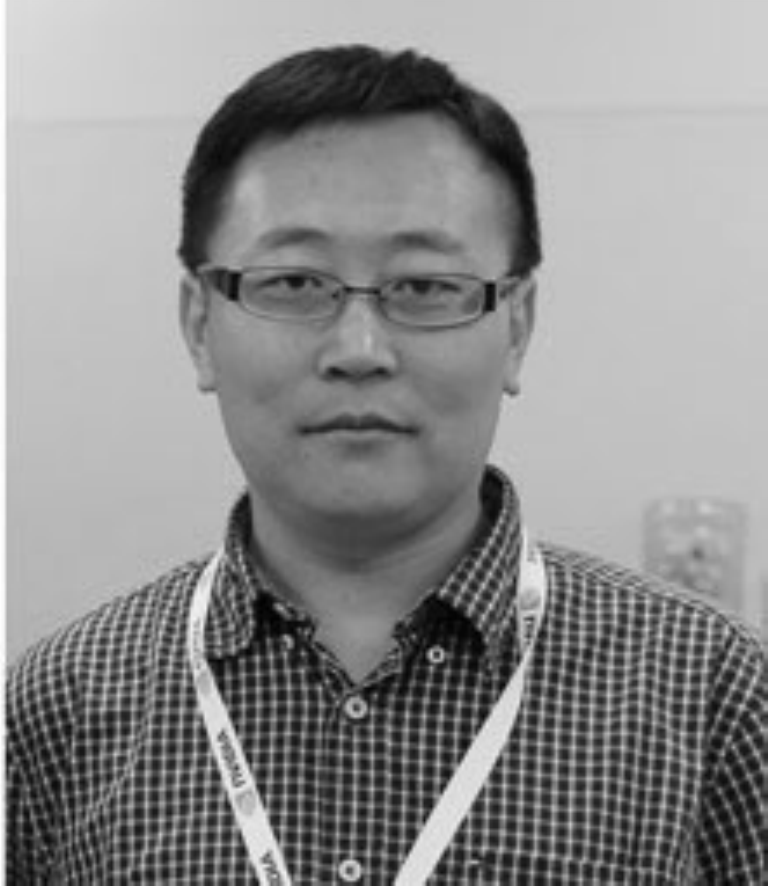}}]{Deyu Meng} Deyu Meng received the B.Sc., M.Sc., and Ph.D. degrees in 2001, 2004, and 2008, respectively, from Xi’an Jiaotong University, Xi'an, China.
He is currently a Professor with the Institute for Information and System Sciences, School of Mathematics and Statistics, Xi’an Jiaotong University. From 2012 to 2014, he took his two-year sabbatical leave in Carnegie Mellon University. His current research interests include self-paced learning, noise modeling, and tensor sparsity.
\end{IEEEbiography}
\begin{IEEEbiography}[{\includegraphics[width=1in,height=1.25in,clip,keepaspectratio]{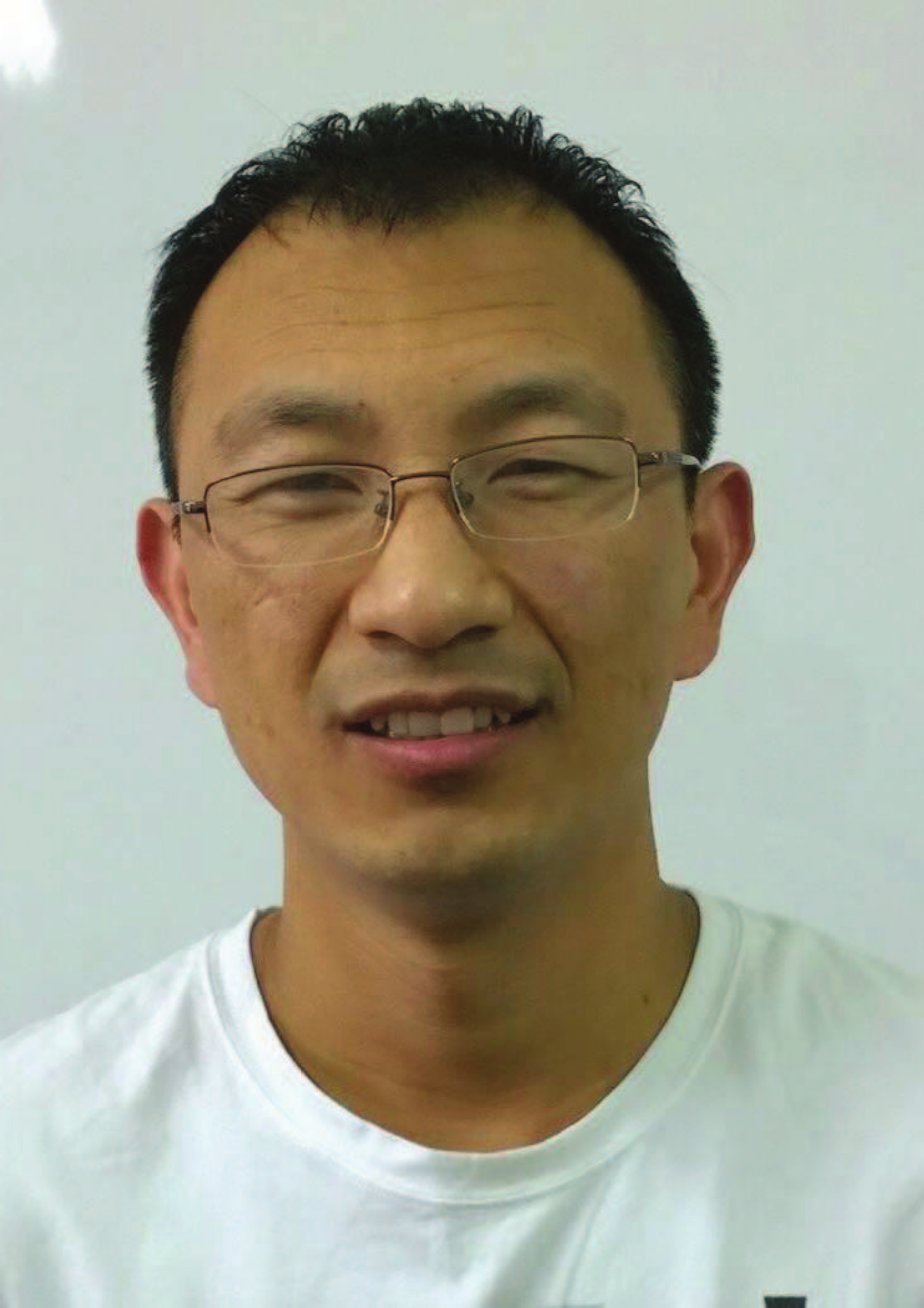}}]{Wangmeng Zuo} (M'09, SM'14) received the Ph.D. degree in computer application technology from the Harbin Institute of Technology, Harbin, China, in 2007. From July 2004 to December 2004, from November 2005 to August 2006, and from July 2007 to February 2008, he was a Research Assistant at the Department of Computing, Hong Kong Polytechnic University, Hong Kong. From August 2009 to February 2010, he was a Visiting Professor in Microsoft Research Asia. He is currently a Professor in the School of Computer Science and Technology, Harbin Institute of Technology. His current research interests include image enhancement and restoration, visual tracking, weakly supervised learning, and image classification. Dr. Zuo is an Associate Editor of the IET Biometrics.
\end{IEEEbiography}

\begin{IEEEbiography}[{\includegraphics[width=1in,height=1.25in,clip,keepaspectratio]{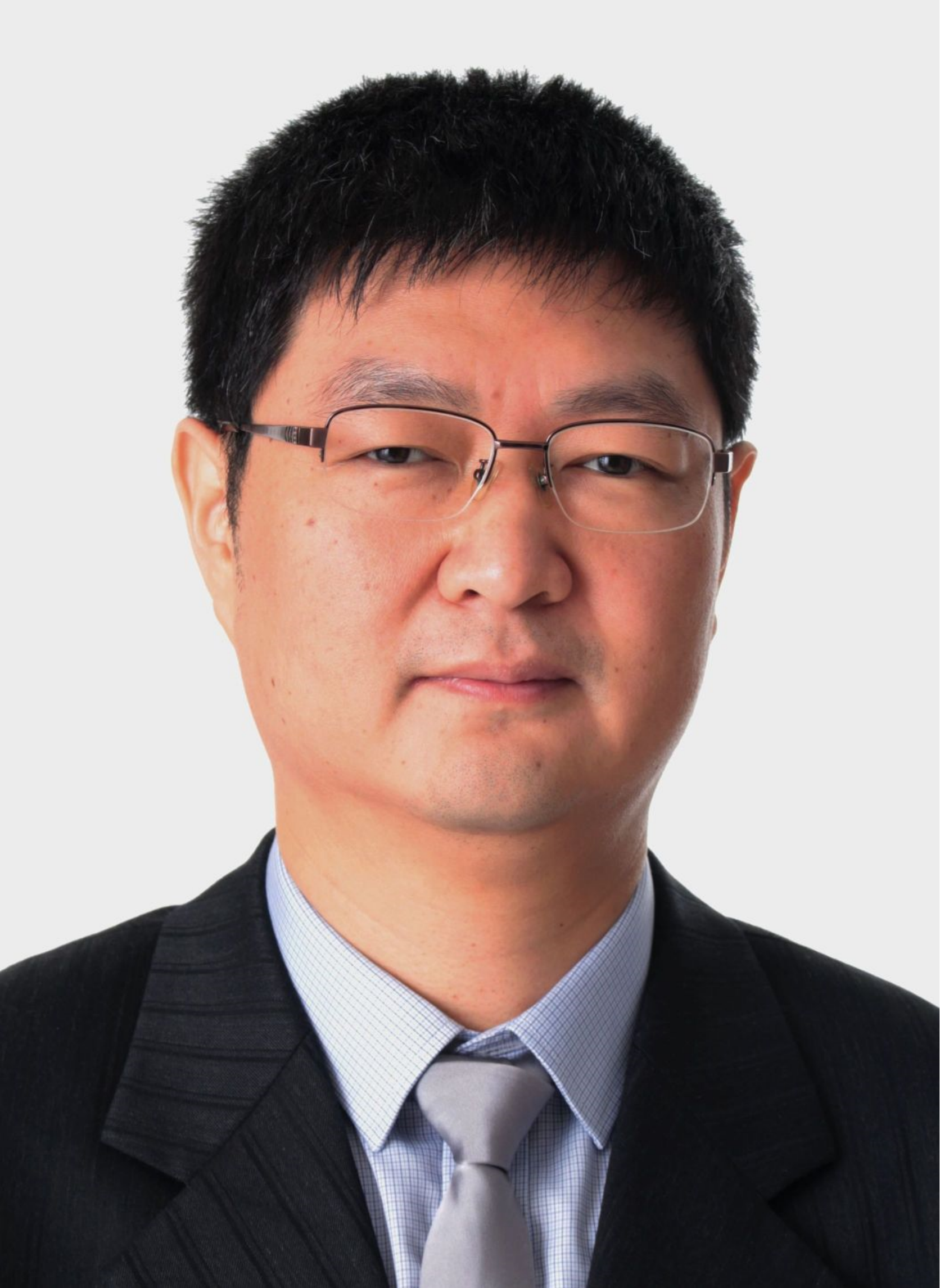}}]{Lei Zhang} (M'04, SM'14) received his B.Sc. degree in 1995 from Shenyang Institute of Aeronautical Engineering, Shenyang, P.R. China, and M.Sc. and Ph.D degrees in Control Theory and Engineering from Northwestern Polytechnical University, Xi’an, P.R. China, respectively in 1998 and 2001, respectively. From 2001 to 2002, he was a research associate in the Department of Computing, The Hong Kong Polytechnic University. From January 2003 to January 2006 he worked as a Postdoctoral Fellow in the Department of Electrical and Computer Engineering, McMaster University, Canada. In 2006, he joined the Department of Computing, The Hong Kong Polytechnic University, as an Assistant Professor. Since July 2015, he has been a Full Professor in the same department. His research interests include Computer Vision, Pattern Recognition, Image and Video Processing, and Biometrics, etc. Prof. Zhang has published more than 200 papers in those areas. As of 2016, his publications have been cited more than 20,000 times in the literature. Prof. Zhang is an Associate Editor of IEEE Trans. on Image Processing, SIAM Journal of Imaging Sciences and Image and Vision Computing, etc. He is a ``Highly Cited Researcher'' selected by Thomson Reuters. More information can be found in his homepage http://www4.comp.polyu.edu.hk/~cslzhang/.
\end{IEEEbiography}

\end{document}